\documentclass{article}

\usepackage{microtype}
\usepackage{graphicx}
\usepackage{subcaption}
\usepackage{booktabs} 

\usepackage{hyperref}

\usepackage[accepted]{icml2026}

\usepackage{amsmath}
\usepackage{amssymb}
\usepackage{mathtools}
\usepackage{amsthm}

\usepackage[capitalize,noabbrev]{cleveref}

 \usepackage{xspace}
 \usepackage{enumitem}

\theoremstyle{plain}
\newtheorem{theorem}{Theorem}[section]

\theoremstyle{definition}
\newtheorem{definition}[theorem]{Definition}
\newtheorem{assumption}[theorem]{Assumption}
\theoremstyle{remark}
\newtheorem{remark}[theorem]{Remark}

\newcommand{\state}{s} 
\newcommand{\reward}{R} 

\usepackage[utf8]{inputenc}
\usepackage[table]{xcolor} 

\definecolor{mygreen}{rgb}{0.1, 0.6, 0.1}
\definecolor{myred}{rgb}{0.8, 0, 0}

\usepackage{multirow}
\usepackage{colortbl}
\usepackage{booktabs}
\usepackage{array}
\usepackage{makecell}
\usepackage{tablefootnote}
\newcolumntype{L}[1]{>{\raggedright\arraybackslash}m{#1}}
\usepackage{tcolorbox}
\definecolor{blue1}{HTML}{196ab1}
\definecolor{blue2}{HTML}{4886c1}
\definecolor{blue3}{HTML}{5e9bd6}
\definecolor{blue4}{HTML}{77b1e2}
\definecolor{blue5}{HTML}{bdd930}
\definecolor{blue6}{HTML}{dfebf6}

\definecolor{red1}{HTML}{de512c}
\definecolor{red2}{HTML}{f2642d}
\definecolor{red3}{HTML}{f68f58}
\definecolor{red4}{HTML}{febf92}
\definecolor{red5}{HTML}{f8e9c8}

\usepackage{graphicx}
\usepackage{mathtools}
\usepackage{amsmath}

\usepackage{xspace}
\usepackage{xcolor}
\usepackage{tabularx}
\usepackage{bbm}

\newtcolorbox{promptbox}[1][]{
    sharp corners,
    fontupper=\small\ttfamily, 
    boxrule=0.5pt,
    colback=gray!5, 
    colframe=gray!50, 
    arc=0pt,
    outer arc=0pt,
    title=#1,
    colbacktitle=gray!20,
    coltitle=black,
    fonttitle=\bfseries\sffamily,
    left=10pt,
    right=10pt,
    top=10pt,
    bottom=10pt,
}

\newcommand{\ourmethod}{\texttt{BVS}\xspace}

\newcommand{\itbf}[1]{\textit{\textbf{#1}}}
\definecolor{baselinecolor}{HTML}{A2CFFE}
\newcommand{\ourcolor}[1]{\cellcolor{blue!5}{#1}}

\icmltitlerunning{BVS: Bayesian Visual Search for Fine-grained Perception}

\begin{document}

\twocolumn[
  \icmltitle{BVS: Bayesian Visual Search with Multimodal Large Language Model for Fine-grained Perception}

  \begin{icmlauthorlist}
    \icmlauthor{Geng Li}{yyy}
    \icmlauthor{Yuxin Peng}{yyy}
  \end{icmlauthorlist}

  \icmlaffiliation{yyy}{Wangxuan Institute of Computer Technology, Peking University}

  \icmlcorrespondingauthor{Yuxin Peng}{pengyuxin@pku.edu.cn}

  \icmlkeywords{Multimodal Large Language Models, Visual Search, Bayesian Optimization, Fine-grained Perception}

  \vskip 0.3in
]



\printAffiliationsAndNotice{}

\begin{abstract}

While Multimodal Large Language Models (MLLMs) demonstrate impressive general capabilities, they struggle with fine-grained perception in ultra-high-resolution (UHR) images, particularly for tiny objects in cluttered scenes. Existing methods face a dilemma: they either rely on inefficient prior-free scanning, or depend on static prior-driven heuristics that lack posterior correction to rectify initial model biases. To address this, we propose \textbf{BVS} (\textbf{B}ayesian \textbf{V}isual \textbf{S}earch), a framework that formulates perception as a global optimization problem over a continuous spatial-scale manifold. Specifically, BVS bridges prior guidance with posterior correction: it utilizes an early-stop attention rollout of MLLM to construct reasoning-aware priors, while employing a scale-aware non-stationary kernel and GP-UCB to dynamically rectify noise and recover missing information in the prior through iterative local observations. We provide theoretical guarantees via sub-linear regret bounds, and extensive experiments demonstrate that BVS significantly outperforms state-of-the-art baselines with a superior trade-off between accuracy and efficiency.

\end{abstract}

\section{Introduction}
\label{sec:intro}

Multimodal Large Language Models (MLLMs) have achieved significant progress in tasks such as visual question answering~\cite{antol2015vqa}, image captioning~\cite{blip2}, and visual grounding~\cite{lai2024lisa}. With the increase in training data and architectural optimizations~\cite{tian2023transformer}, MLLMs demonstrate increasingly strong capabilities in visual perception~\cite{Qwen2VL,internvl2,Qwen3-VL,wang2025internvl3_5,peng2025survey,xiao2026survey}. However, despite these advancements, fine-grained perception tasks in ultra-high-resolution (UHR) and cluttered dense environments~\cite{wu2024v,wang2025divide,lai2025mini-o3} remain a challenge for state-of-the-art MLLMs~\cite{Qwen3-VL, wang2025internvl3_5}. These tasks typically require identifying single or multiple tiny objects (occupying merely 0.2\% to 1\% of the image area) within image exceeding millions of pixels, and then inferring their attributes or spatial relationships. Although existing MLLMs generally support dynamic resolution~\cite{Qwen3-VL,Qwen2.5-VL,Qwen2-VL,zhu2025internvl3,wang2025internvl3_5}, the massive number of visual tokens generated by high-resolution images continues to limit model performance.

To address this bottleneck, various visual search strategies have been developed to identify query-relevant sub-regions. These methods allow models to focus on critical visual details while filtering out irrelevant background information. According to utility of prior distribution, they can be categorized into two main groups: \textit{Prior-free Visual Search Methods} and \textit{Prior-driven Visual Search Methods}.

\textit{Prior-free Visual Search Methods} do not rely on explicit assumptions regarding the distribution of semantically salient regions. Instead, they typically employ traditional search algorithms, such as Best-First Search, Monte Carlo Tree Search (MCTS), or greedy search to explore visual regions from scratch with an evaluation function guided. SEAL~\cite{wu2024v} pioneered this direction by integrating a trainable scoring head into LLaVA~\cite{llava} to perform a greedy hierarchical grid search. To accelerate this process, subsequent approaches adopted training-free evaluation metrics and structured high-resolution images into hierarchical trees, utilizing algorithms like Best-First Search~\cite{shen2024zoomeye} or MCTS~\cite{li2025dyfotrainingfreedynamicfocus} for efficient traversal. Alternatively, a series of end-to-end grounding and reasoning methods, such as DeepEyes~\cite{zheng2025deepeyes}, Thyme~\cite{zhang2025thyme}, mini-o3~\cite{lai2025mini-o3}, and PixelReasoner~\cite{su2025pixel} are proposed. They learn search policies directly from data. And during inference, they employ a greedy strategy to select sub-images based on the grounding results with the highest probabilities. While these prior-free methods improve localization, they suffer from two primary limitations: 

(1) \textit{Incompleteness of the search space.} To reduce search complexity, several methods~\cite{wu2024v,shen2024zoomeye} partition the original image into discrete grid-based sub-images, which risks fragmenting key semantic regions. Meanwhile, others~\cite{li2025dyfotrainingfreedynamicfocus, zheng2025deepeyes, zhang2025thyme, lai2025mini-o3, su2025pixel} rely solely on exploring candidate grounding proposals generated by the model. These proposals may fail to encompass the actual target region, particularly when dealing with long-tail object distributions. Ultimately, these designs impose a ceiling on the achievable performance of search methods.

(2) \textit{Inefficiency due to lack of priors.} Lacking initial semantic guidance from the image content as prior, these methods often require a substantial number of steps to establish a sufficiently informative posterior distribution, resulting in low practical efficiency. For instance, ZoomEye often necessitates over ten interaction rounds with the MLLM to obtain high-quality visual input~\cite{zhong2025focus}. This incurs prohibitive computational costs that far exceed standard inference, rendering such approaches impractical for many real-world applications.

\textit{Prior-driven Visual search methods} utilize embedding similarity or internal MLLM attention weights as priors salient distribution. Guided by these distributions, they typically employ greedy strategies to select the final visual sub-region. $DC^2$~\cite{wang2025divide} decomposes high-resolution images into small-scale crops for captioning, subsequently using textual similarity to identify regions most relevant to the query. Similarly, RAP~\cite{wang2025retrieval} leverages CLIP-based image-text similarity to directly retrieve the best-matching visual input. Another category of works leverages internal MLLM outputs as priors. FOCUS~\cite{zhong2025focus} extracts object nouns (e.g., ``dog" or ``cat") from the query to construct a prior based on embedding similarity within specific MLLM layers, then selects the optimal sub-region by traversing the area of maximum activation. ViCrop~\cite{zhangmllms} utilizes the attention weights of the last token from a specific layer as a prior, directly selecting the region that maximizes the relative attention weight. Although these methods improve search efficiency, they exhibit two primary deficiencies:

(1) \textit{Loss of logical reasoning context.} These priors fail to capture the abstract logical reasoning information in the query. For instance, extracting object nouns behavior would explicitly discard logical terms. Furthermore, CLIP-based or embedding-based metrics often yield higher similarity scores between ``the antonym of dog" and ``dog" than with ``cat," across both image and text modalities. Additionally, recent research indicates that single-layer attention weights in MLLMs typically fail to capture the complex semantic reasoning capabilities that emerge in deeper layers~\cite{yu2025multimodal}. As a result, these methods often maintain objects context but lose the logical reasoning from the query.

(2) \textit{Over-reliance on priors.} By strictly restricting the search range to high-prior regions, these methods become highly susceptible to noise and fluctuations in the prior distribution. If the prior fails to cover important regions or contains sharp noise, the search is prone to converging to local optima.

To address these challenges, we propose \textbf{\ourmethod} (\textbf{B}ayesian \textbf{V}isual \textbf{S}earch), a framework that reconceptualizes visual search as a principled global optimization of visual relevance over a continuous spatial-scale manifold. Unlike existing methods that are confined to discrete tiling or static heuristics, \ourmethod establishes a self-correcting search loop that synergizes reasoning-aware priors with iterative posterior updates. Specifically, we navigate the search space by first extracting a reasoning-aware prior via an \textbf{Early-stop Attention Rollout}, which preserves the complex logical constraints often lost in shallow feature maps. This prior serves as the initial belief for a Gaussian Process-based active search, where the GP-UCB acquisition function strategically balances the exploration of unobserved regions with the exploitation of high-relevance cues. To navigate the inherent non-stationarity across zoom levels, we introduce a \textbf{Scale-Aware Non-stationary Kernel}. This covariance structure explicitly accounts for the varying spatial correlation lengths at different resolutions. By unifying these components, \ourmethod establishes a principled synergy between reasoning-driven prior guidance and observation-based posterior correction. This allows the framework to not only accelerate search through internal model knowledge but also robustly rectify initial biases via direct visual feedback. Furthermore, we provide theoretical foundations that the proposed kernel satisfies \textit{Mercer’s theorem}, and establish that \ourmethod achieves a \textit{sub-linear regret bound} under specific assumption, providing a formal guarantee for the convergence and efficiency of the optimization process.


The primary contributions of this work are as follows:
\begin{itemize}
\item A Bayesian Optimization-based visual search framework, \textbf{\ourmethod}, that simultaneously leverages prior distributions for search acceleration and posterior updates for error correction.
\item A \textbf{Scale-Aware Non-stationary Kernel} that explicitly captures scale-dependent spatial correlations to accommodate varying visual resolutions and facilitate rapid convergence in multi-scale spaces.
\item Beyond empirical results, theoretical guarantees is provided that the proposed kernel satisfies \textit{Mercer's theorem} and the framework achieves \textit{a sub-linear regret bound} under specific assumptions.
\end{itemize}

\section{Related Works}
\label{sec:related_work}

\textbf{High-Resolution Multimodal Large Language Models.}
Recent advancements in Multimodal Large Language Models (MLLMs) have significantly bridged the gap between visual perception and linguistic reasoning~\citep{bai2025qwen2, li2024llava, team2025kimi}. Leading open-source models, such as LLaVA-Next~\citep{liu2024improved}, InternVL~\citep{chen2024internvl, wang2025internvl3_5}, and Qwen-VL~\citep{bai2023qwen, Qwen3-VL}, employ dynamic resolution strategies to handle varying aspect ratios and image details. Despite these architectural optimizations, processing ultra high resolution (UHR) images inevitably introduces a massive number of irrelevant visual tokens, resulting in suboptimal performance on fine-grained tasks requiring the identification of tiny objects in cluttered environments~\citep{wu2024v, lai2025mini-o3}.

\textbf{Learning-based Active Perception.}
To overcome resolution bottlenecks, a growing body of work focuses on instilling ``active perception" capabilities directly into MLLMs through specialized training. These methods treat visual search as a learnable policy. For instance, SEAL~\citep{wu2024v} integrates a distinct scoring head via Supervised Fine-Tuning (SFT) to guide hierarchical exploration. More recently, Reinforcement Learning (RL) has been adopted to train agents that sequentially query visual information, as seen in DeepEyes~\citep{zheng2025deepeyes}, Thyme~\citep{zhang2025thyme}, and PixelReasoner~\citep{su2025pixel}. While these learning-based approaches demonstrate strong performance within their training domains, they incur substantial data curation and training costs. Furthermore, their learned policies heavily rely on the precision of intermediate visual grounding. Consequently, they are susceptible to performance degradation in long-tail scenarios where the model fails to ground the target object effectively. In contrast, our approach is entirely training-free, leveraging the bayesian optimization to search the continuous spatial-scale space.

\textbf{Inference-time Visual Refinement.}
An alternative paradigm focuses on optimizing visual inputs during the inference phase, obviating the need for parameter updates. Early works like $DC^2$~\citep{wang2025divide} and RAP~\citep{wang2025retrieval} employ retrieval-based mechanisms, using CLIP embeddings or text similarities to select relevant image crops. More sophisticated methods structure the image into a hierarchical tree or grid, applying search algorithms such as Monte Carlo Tree Search (MCTS) in DyFo~\citep{li2025dyfotrainingfreedynamicfocus} or heuristic traversal in ZoomEye~\citep{shen2024zoomeye}. Others, such as ViCrop~\citep{zhangmllms} and FOCUS~\citep{zhong2025focus}, exploit internal attention maps as saliency indicators. 
However, a common limitation of these methods is their reliance on \textit{discrete} search spaces, treating the image as a collection of disjoint patches and \textit{greedy} selection strategies. This discretization often fragments semantic information across patch boundaries, while greedy algorithms are prone to local optima. Distinct from these heuristic approaches, \ourmethod formulates visual search as a global optimization problem over a \textit{continuous} spatial-scale manifold. By adopting Bayesian Optimization, we theoretically balance exploration and exploitation, ensuring robust convergence to fine-grained details efficiently.

\begin{figure*}[t]
  \begin{center}
    \centerline{\includegraphics[width=\textwidth]{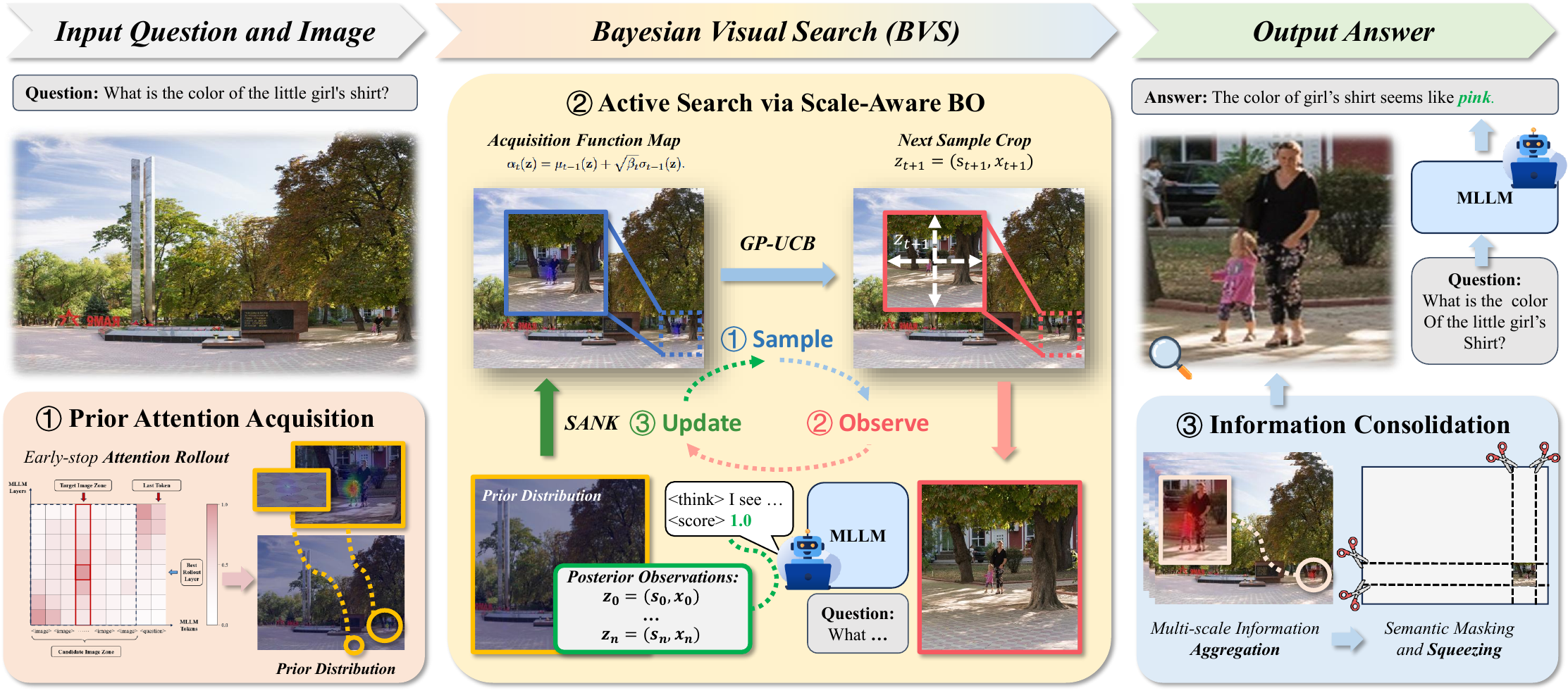}}
    \caption{\textbf{Overview of the \ourmethod.} Our framework consists of three main stages: 
        (1) \textbf{Prior Attention Acquisition}: extracting coarse regions of interest using early-stop attention rollout. 
        (2) \textbf{Active Search via Scale-Aware BO}: an iterative Bayesian Optimization loop that progressively locates fine-grained visual evidence. It employs a Scale-Aware Non-stationary Kernel (SANK) to model multi-scale spatial dependencies and a GP-UCB acquisition function to balance exploration and exploitation. 
        (3) \textbf{Information Consolidation}: aggregating posterior distribution and performing semantic masking and squeezing to construct a high-utility visual input for the MLLM to generate the final answer.}
    \label{fig:main_method}
  \end{center}
\end{figure*}

\newcommand{\pub}[1]{{\color{gray}\fontsize{7pt}{8pt}\selectfont\textit{(#1)}}}

\begin{table*}[t]
    \centering
    \caption{Comparison results with training-free visual search methods on fine-grained perception benchmarks. The best performance in each task is shown in \textbf{bold}, the second-best performance is \underline{underlined}.}
    \label{tab:main_results}
    \small
    \resizebox{\linewidth}{!}{
    \begin{tabular}{clccccccccc}
        \toprule
        \multirow{2}{*}{\begin{tabular}[c]{@{}c@{}}\textbf{Model}\end{tabular}} & 
        \multirow{2}{*}{\textbf{Method}} & 
        \multicolumn{3}{c}{\itbf{$V^*$ Bench}} & 
        \multicolumn{3}{c}{\itbf{HR-Bench 4K}} & 
        \multicolumn{3}{c}{\itbf{HR-Bench 8K}} \\ 
        \cmidrule(lr){3-5} \cmidrule(lr){6-8} \cmidrule(lr){9-11}
         & & \itbf{Attr.} & \itbf{Spat.} & \itbf{Over.} & \itbf{FSP} & \itbf{FCP} & \itbf{Over.} & \itbf{FSP} & \itbf{FCP} & \itbf{Over.} \\ \midrule

        \multirow{5}{*}{\begin{tabular}[c]{@{}c@{}}InternVL3.5-4B \\ ~\cite{wang2025internvl3_5}\end{tabular}} 
         & w/o reasoning                           & 67.0 & 55.3 & 62.3 & 56.5 & 42.5 & 49.5 & 51.0 & 40.0 & 45.5 \\
         & ZoomEye~\yrcite{shen2024zoomeye} \pub{EMNLP'25} & \underline{80.9} & \underline{77.6} &  \underline{79.6} & 78.0 & 46.0 & 62.0 & 77.5 & 44.5 & 61.0 \\
         & RAP~\yrcite{wang2025retrieval} \pub{ICML'25}     & 80.0 & 71.1 & 76.4 & 77.5 &  47.8 & 62.6 & 77.0 & 43.8 & 60.4 \\
         & ViCrop~\yrcite{zhangmllms} \pub{ICLR'25}         & 77.4 & 72.4 & 75.4 & \underline{80.3} & \underline{48.3} & \underline{64.3} & \underline{78.0} & \underline{44.8} & \underline{61.4} \\
         & \ourmethod~(Ours)                             & \ourcolor{\textbf{83.5}} & \ourcolor{\textbf{80.3}} & \ourcolor{\textbf{82.2}} & \ourcolor{\textbf{83.5}} & \ourcolor{\textbf{52.3}} & \ourcolor{\textbf{67.9}} & \ourcolor{\textbf{85.0}} & \ourcolor{\textbf{48.8}} & \ourcolor{\textbf{66.9}} \\ \midrule

        \multirow{5}{*}{\begin{tabular}[c]{@{}c@{}}Qwen3-VL-4B\\~\cite{Qwen3-VL}\end{tabular}} 
         & w/o reasoning                           & 80.0 & 76.3 & 78.5 & 89.0 & 70.3 & 79.6 & 85.3 & 64.0 & 74.6 \\
         & ZoomEye~\yrcite{shen2024zoomeye} \pub{EMNLP'25} & \underline{91.3} & \underline{81.6} & \underline{87.4} & \underline{91.3} & \underline{72.8} & \underline{82.1} & \underline{89.8} & \underline{64.3} & \underline{77.1} \\
         & RAP~\yrcite{wang2025retrieval} \pub{ICML'25}     & 87.9 & 80.3 & 84.8 & 91.0 & 68.8 & 79.9 & 86.3 & 56.5 & 71.4 \\
         & ViCrop~\yrcite{zhangmllms} \pub{ICLR'25}         & 85.2 & 77.6 & 82.2 & 89.3 & 71.0 & 80.1 & 85.3 & 62.8 & 74.1 \\
         & \ourmethod~(Ours)                              & \ourcolor{\textbf{96.5}} & \ourcolor{\textbf{82.9}} & \ourcolor{\textbf{91.1}} & \ourcolor{\textbf{91.5}} & \ourcolor{\textbf{73.3}} & \ourcolor{\textbf{82.4}} & \ourcolor{\textbf{93.8}} & \ourcolor{\textbf{65.3}} & \ourcolor{\textbf{79.5}} \\ \midrule

        \multirow{5}{*}{\begin{tabular}[c]{@{}c@{}}InternVL3.5-8B \\ ~\cite{wang2025internvl3_5}\end{tabular}} 
         & w/o reasoning                           & 65.2 & 72.4 & 68.1 & 70.5 & 56.8 & 63.6 & 60.3 & 53.8 & 57.0 \\
         & ZoomEye~\yrcite{shen2024zoomeye} \pub{EMNLP'25} & \underline{85.2} & 80.3 &  \underline{83.2} & 84.3 & 58.3 & 71.3 & 78.5 & \underline{55.5} & 67.0 \\
         & RAP~\yrcite{wang2025retrieval} \pub{ICML'25}     & 82.6 & \underline{81.6} & 82.2 & 80.0 &  57.5 & 68.8 & 76.8 & 55.3 & 66.1 \\
         & ViCrop~\yrcite{zhangmllms} \pub{ICLR'25}         & 80.0 & 76.3 & 78.5 & \underline{89.5} & \underline{58.5} & \underline{74.0} & \underline{84.0} & 54.3 & \underline{69.2} \\
         & \ourmethod~(Ours)                             & \ourcolor{\textbf{93.0}} & \ourcolor{\textbf{89.5}} & \ourcolor{\textbf{91.6}} & \ourcolor{\textbf{94.8}} & \ourcolor{\textbf{60.5}} & \ourcolor{\textbf{77.6}} & \ourcolor{\textbf{90.8}} & \ourcolor{\textbf{58.5}} & \ourcolor{\textbf{74.6}} \\ \midrule

        \multirow{5}{*}{\begin{tabular}[c]{@{}c@{}}Qwen3-VL-8B\\~\cite{Qwen3-VL}\end{tabular}} 
         & w/o reasoning                           & 82.6 & 82.9 & 82.7 & 92.0 & 63.8 & 77.9 & 85.5 & 60.1 & 72.8 \\
         & ZoomEye~\yrcite{shen2024zoomeye} \pub{EMNLP'25} & \underline{92.2} & 86.8 & \underline{90.1} & 92.5 & \underline{65.0} & \underline{78.8} & \underline{88.5} & \underline{62.5} & \underline{75.5} \\
         & RAP~\yrcite{wang2025retrieval} \pub{ICML'25}     & 88.7 & \underline{88.2} & 88.5 & 91.5 & 60.3 & 75.9 & 86.5 & 58.5 & 72.5 \\
         & ViCrop~\yrcite{zhangmllms} \pub{ICLR'25}         & 86.1 & 82.9 & 84.8 & \underline{93.0} & 64.0 & 78.5 & 86.5 & 61.5 & 74.0 \\
         & \ourmethod~(Ours)                              & \ourcolor{\textbf{95.7}} & \ourcolor{\textbf{89.5}} & \ourcolor{\textbf{93.2}} & \ourcolor{\textbf{96.8}} & \ourcolor{\textbf{72.8}} & \ourcolor{\textbf{84.8}} & \ourcolor{\textbf{95.8}} & \ourcolor{\textbf{63.3}} & \ourcolor{\textbf{79.5}} \\ 
         
        \bottomrule
    \end{tabular}
    }
\end{table*}

\begin{table*}[t]
    \centering
    \caption{Comparison results with training-based visual search on fine-grained perception benchmarks. The best performance in each task is shown in \textbf{bold}, the second-best performance is \underline{underlined}.}
    \label{tab:grounding}
    \resizebox{1.0\linewidth}{!}{
    \begin{tabular}{lccccccccc}
        \toprule
        \multirow{2}{*}{\textbf{Method}}        & \multicolumn{3}{c}{\itbf{$V^*$ Bench}} & \multicolumn{3}{c}{\itbf{HR-Bench 4K}} & \multicolumn{3}{c}{\itbf{HR-Bench 8K}} \\ \cmidrule(lr){2-4} \cmidrule(lr){5-7} \cmidrule(lr){8-10}
        & \multicolumn{1}{c}{\itbf{Attribute}} & \multicolumn{1}{c}{\itbf{Spatial}} & \multicolumn{1}{c}{\itbf{Overall}} & \multicolumn{1}{c}{\itbf{FSP}}      & \multicolumn{1}{c}{\itbf{FCP}}      & \multicolumn{1}{c}{\itbf{Overall}} & \multicolumn{1}{c}{\itbf{FSP}}      & \multicolumn{1}{c}{\itbf{FCP}}      & \multicolumn{1}{c}{\itbf{Overall}} \\
        \hline
        Pixel Reasoner-7B~\citep{su2025pixel}        & -        & -       & 84.3    & -     & -     & -            & -       & -       & -       \\
        Simple O3-7B~\citep{wang2025simple}        & -        & -       & 90.4    & -     & -     & 76.2      & -       & -       & -       \\
        DeepEyes-7B~\citep{zheng2025deepeyes}        & 91.3        & \underline{88.2}       & 90.1    & \underline{91.3}     & 59.0     & 75.1         & \underline{86.8}    & 58.5    & 72.6    \\ 
        Thyme-7B~\citep{zhang2025thyme}        & 83.5        & 80.3       & 82.2    & 91.0     & 63.0     & 77.0    & 86.5    & 57.5    & 72.0    \\
        TreeVGR-7B~\citep{wang2025traceable}        & \underline{94.0}        & 87.0       & \underline{91.1}    & 90.3     & \underline{64.0}     & 77.1    & 86.5       & \underline{59.8}       & 73.1       \\
        Mini-o3-7B~\citep{lai2025mini-o3}        & -        & -       & 88.2    & -     & -     & 77.5       & -       & -       & 73.3       \\
        Qwen3-VL-8B-\textit{w tool} \citep{Qwen3-VL} & -        & -       & 90.1    & -     & -     & \underline{82.3}       & -       & -       & \underline{78.0}       \\ 
        \ourcolor{\ourmethod-Qwen3VL-8B (Ours)} & \ourcolor{\textbf{95.7}}    &  \ourcolor{\textbf{89.5}}  & \ourcolor{\textbf{93.2}}   & \ourcolor{\textbf{96.8}}     & \ourcolor{\textbf{72.8}}    & \ourcolor{\textbf{84.8}}   & \ourcolor{\textbf{95.8}}     & \ourcolor{\textbf{63.3}}    & \ourcolor{\textbf{79.5}}    \\
        \bottomrule
    \end{tabular}
}
    
\end{table*}

\section{Methodology}
\label{sec:method}

In this section, we present the problem definition and overall workflow in \cref{subsec:overview}, followed by the logical reasoning-aware prior acquisition in \cref{subsec:prior}. We then detail the core Bayesian Optimization (BO) mechanism and our proposed Scale-Aware Non-stationary Kernel in \cref{subsec:kernel,subsec:bo_search}. Theoretical convergence analysis is provided in \cref{subsec:analysis}, and the final visual consolidation strategy for MLLM is described in \cref{subsec:squeezing}.

\subsection{System Overview}
\label{subsec:overview}

We model the search process as the optimization of an unknown relevance function $f: \mathcal{X} \to [0, 1]$. The search space $\mathcal{X} = \mathcal{S} \times \Omega$ is a continuous manifold spanned by state vectors $\mathbf{z} = (s, \mathbf{x})$, where $s \in \mathcal{S}$ represents the observation scale and $\mathbf{x} \in \Omega \subset \mathbb{R}^2$ denotes the spatial coordinates. As illustrated in \cref{fig:main_method}, the pipeline comprises three stages:

\begin{enumerate}[leftmargin=*]
    \item \textbf{Prior Attention Acquisition}: 
    Takes the query $Q$ and image $\mathcal{I}$ as inputs. By leveraging an attention rollout mechanism with early stopping, it extracts a coarse-grained relevance map from internal attention weights. The output is a non-uniform prior mean function $\mu_0(\mathbf{z})$, initializing the belief over $\mathcal{X}$ before fine-grained exploration.
    
    \item \textbf{Active Search via Scale-Aware BO}: 
    Takes $\mu_0$ and $\mathcal{X}$ to drive the search. In each iteration $t$, it updates the Gaussian Process posterior $f \sim \mathcal{GP}(\mu_t, k)$ using the observation history $\mathcal{D}_t$. The module selects an optimal observation point $\mathbf{z}_{t+1}$ via acquisition function maximization and queries the MLLM to obtain a relevance score $y_{t+1} \in [0, 1]$. The output is the converged posterior distribution $(\mu_T, \sigma_T^2)$.
    
    \item \textbf{Information Consolidation}: 
    Takes the posterior $\mu_T$ and original image $\mathcal{I}$. It spatially aggregates high-probability regions from $\mu_T$ to filter out background noise. The final output is a consolidated, information-dense visual representation $\mathcal{I}_{final}$ for downstream inference.
\end{enumerate}

\subsection{Early-stop Attention Rollout}
\label{subsec:prior}

To construct an informative prior mean function $\mu_0(\mathbf{z})$, we leverage the intrinsic cross-modal alignment capabilities of the MLLM. Specifically, we aim to quantify the contribution of each visual token to the logical reasoning process embedded in the deep layers of the model.

Let $\mathbf{A}^{(l)} \in \mathbb{R}^{N \times N}$ denote the row-normalized attention matrix at the $l$-th layer, where $N$ is the total number of tokens. We employ an attention rollout strategy to propagate relevance from the final query token back to the visual input. The relevance vector $\mathbf{r}^{(l)}$ at layer $l$ is updated recursively via:
\begin{equation}
    \mathbf{r}^{(l-1)} = \frac{1}{2} \left( \mathbf{r}^{(l)} \mathbf{A}^{(l)} + \mathbf{r}^{(l)} \right),
    \label{eq:rollout}
\end{equation}
where the factor $\frac{1}{2}$ accounts for the skip connections in the architecture, averaging the attention weights and the identity mapping. The recursion is initialized at the last layer $L$ with $\mathbf{r}^{(L)}$ set as a one-hot vector corresponding to the last token.

\textbf{The Phenomenon of Attention Divergence.}
A direct application of the original Attention Rollout, which propagates relevance through all layers down to the input (i.e., $l \to 0$), proves suboptimal for deep MLLMs in practice~\cref{tab:ablation_priors}. We observe a phenomenon we term \textit{Attention Divergence}: as relevance propagates into shallower layers, image tokens attention distribution tends to over-smooth and diverge like \cref{fig:attention}. This contradicts observations in shallower discriminative models (e.g., BERT-based architectures) where full propagation is beneficial~\cite{abnar2020quantifying}.

To mitigate this, we introduce an Early-stop mechanism. We terminate the recursion at the very middle layer $l_{stop}=l_{full}/2$, discarding the noisy signals from shallow layers to avoid attention divergence. The final relevance scores corresponding to the image token indices are then extracted from $\mathbf{r}^{(l_{stop})}$ and reshaped into a 2D heatmap. This heatmap is normalized to $[0, 1]$ to serve as the initial prior $\mu_0(\mathbf{z})$ for the subsequent Bayesian Optimization.

\subsection{Scale-Aware Non-stationary Covariance}
\label{subsec:kernel}

A fundamental challenge in visual search is that spatial correlation varies with observation scale. In a coarse view with large scale, a small spatial displacement has a negligible effect on the window's content. However, in a fine-grained view, the narrow field of view makes the relevance score highly sensitive to minor displacements, as the target can easily exit the crop boundary. To consider this difference, we propose a \textbf{Scale-Aware Non-stationary Kernel}.

Let $\ell(s): \mathcal{S} \to \mathbb{R}^+$ be a positive monotonically increasing function mapping the scale $s$ to a spatial length-scale.

\begin{definition}[Scale-Aware Kernel]
For any two points $\mathbf{z}_i = (s_i, \mathbf{x}_i)$ and $\mathbf{z}_j = (s_j, \mathbf{x}_j)$ in $\mathcal{X}$, the covariance function $k(\mathbf{z}_i, \mathbf{z}_j)$ is defined as:
\begin{equation}
\resizebox{0.91\columnwidth}{!}{
$k(\mathbf{z}_i, \mathbf{z}_j) = \sigma_f^2 \left( \frac{2\ell_i\ell_j}{\ell_i^2 + \ell_j^2} \right) \Psi_{\nu} \left( \sqrt{ \frac{2\|\mathbf{x}_i - \mathbf{x}_j\|^2}{\ell_i^2 + \ell_j^2} + \frac{|s_i - s_j|^2}{\lambda_s^2} } \right)$,
}
\end{equation}
where $\ell_i = \ell(s_i)$, $\Psi_{\nu}$ is the Matérn radial function, $\sigma_f^2$ is signal variance, and $\lambda_s$ is the $s$ dimension length-scale.
\end{definition}

This construction allows the ``influence radius'' of an observation to expand dynamically. When the model performs a small scale observation, the effective spatial length-scale $\ell(s)$ increases to reflect the higher correlation between neighboring pixels in the upscaled feature space.

\begin{theorem}[Mercer's Condition]
The scale-aware kernel $k$ defined above is positive semi-definite (PSD) for any set of points in $\mathcal{X}$ and satisfies Mercer’s Theorem.
\end{theorem}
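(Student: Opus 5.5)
This kernel is a Paciorek--Schervish (Gibbs-type) non-stationary Matérn construction. In two spatial dimensions the exact Paciorek--Schervish prefactor would be $\left(\frac{2\ell_i\ell_j}{\ell_i^2+\ell_j^2}\right)^{d/2}$ with $d=2$, which is exactly the prefactor in the definition. I plan to prove positive semi-definiteness by writing $k$ as a mixture, over a scale parameter, of products of PSD kernels, and then invoke closure of PSD kernels under products, nonnegative mixtures and limits. Mercer's theorem then follows from continuity and compactness.

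\textbf{Step 1 (scale-mixture representation).} Use the fact that the Matérn radial function is a Gaussian scale mixture:
\[
\Psi_\nu(r)=\int_0^\infty e^{-r^2\omega}\,d\mu_\nu(\omega)
\]
for a nonnegative measure $\mu_\nu$ (an inverse-gamma law in $1/\omega$). This holds because $\Psi_\nu(\sqrt{\cdot})$ is completely monotone on $[0,\infty)$, by Bernstein's theorem. Substituting $r^2=\frac{2\|\mathbf{x}_i-\mathbf{x}_j\|^2}{\ell_i^2+\ell_j^2}+\frac{|s_i-s_j|^2}{\lambda_s^2}$, the exponential factorizes, and $k(\mathbf{z}_i,\mathbf{z}_j)/\sigma_f^2$ becomes
\[
\int_0^\infty \underbrace{\frac{2\ell_i\ell_j}{\ell_i^2+\ell_j^2}\exp\!\Big(-\omega\frac{2\|\mathbf{x}_i-\mathbf{x}_j\|^2}{\ell_i^2+\ell_j^2}\Big)}_{=:G_\omega(\mathbf{z}_i,\mathbf{z}_j)}\;\exp\!\Big(-\omega\frac{|s_i-s_j|^2}{\lambda_s^2}\Big)\,d\mu_\nu(\omega).
\]

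\textbf{Step 2 ($G_\omega$ is PSD; main step).} Fix $\omega>0$ and set $\Sigma_i=\frac{\ell_i^2}{4\omega}I_2$. Using the Gaussian convolution identity
\[
\int_{\mathbb{R}^2}\phi_{\Sigma_i}(\mathbf{x}_i-\mathbf{u})\,\phi_{\Sigma_j}(\mathbf{x}_j-\mathbf{u})\,d\mathbf{u}=\phi_{\Sigma_i+\Sigma_j}(\mathbf{x}_i-\mathbf{x}_j),
\]
where $\phi_\Sigma$ is the centred Gaussian density, I will check that $G_\omega(\mathbf{z}_i,\mathbf{z}_j)$ equals $c\,\ell_i\ell_j\,\phi_{\Sigma_i+\Sigma_j}(\mathbf{x}_i-\mathbf{x}_j)$ for a constant $c>0$ depending only on $\omega$. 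Writing $g_i(\mathbf{u})=\sqrt{c}\,\ell_i\,\phi_{\Sigma_i}(\mathbf{x}_i-\mathbf{u})$, this gives $G_\omega(\mathbf{z}_i,\mathbf{z}_j)=\langle g_i,g_j\rangle_{L^2(\mathbb{R}^2)}$, a Gram matrix, hence PSD. The obstacle is the constant bookkeeping: the dimension $d=2$ must be what makes the determinant factor $|\Sigma_i|^{1/4}|\Sigma_j|^{1/4}/|(\Sigma_i+\Sigma_j)/2|^{1/2}$ equal to exactly $\frac{2\ell_i\ell_j}{\ell_i^2+\ell_j^2}$ with no leftover power. I will compute it explicitly. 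Monotonicity of $\ell$ is not needed; only $\ell>0$ is used.

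\textbf{Step 3 (closing the argument).} The second factor $\exp(-\omega|s_i-s_j|^2/\lambda_s^2)$ is a stationary Gaussian kernel on $\mathcal{S}$ and is PSD by Bochner's theorem. Its product with $G_\omega$ is PSD by the Schur product theorem. Integrating against the nonnegative measure $\mu_\nu$ preserves PSD, since for any coefficients $c_i$ the quadratic form $\sum_{i,j} c_i c_j\,(\cdot)$ is a nonnegative integral. Multiplying by $\sigma_f^2>0$ keeps it PSD, so $k$ is PSD on any finite point set. For Mercer's condition, assume $\ell$ is continuous, so $k$ is continuous, and $\mathcal{X}=\mathcal{S}\times\Omega$ is compact (a bounded image domain and a bounded scale range). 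A continuous symmetric PSD kernel on a compact set gives a positive self-adjoint integral operator on $L^2(\mathcal{X})$, and Mercer's theorem then yields the uniformly convergent eigen-expansion $k=\sum_n\lambda_n e_n\otimes e_n$ with $\lambda_n\ge 0$.
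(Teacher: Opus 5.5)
Your proof is correct. It reaches the paper's conclusion by a more self-contained route.

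The paper embeds $\mathbf{z}=(\mathbf{x},s)$ in $\mathbb{R}^3$ and takes the block-diagonal local covariance $\mathrm{diag}(\ell(s)^2\mathbf{I}_2,\lambda_s^2)$. It then cites the Paciorek--Schervish theorem as a black box and checks that the determinant prefactor (in which $\lambda_s$ cancels) and the quadratic form $Q_{ij}$ reproduce the kernel exactly.

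You instead unpack that theorem in this special case:
\begin{itemize}
\item the Gaussian-scale-mixture representation of $\Psi_\nu$;
\item a Gram-matrix (convolution) argument for the non-stationary spatial factor, where your constant is $c=\pi/\omega$ and the $d=2$ bookkeeping works out as you anticipated;
\item a stationary Gaussian factor in $s$, handled by the Schur product;
\item closure under nonnegative mixtures.
\end{itemize}

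The citation route is shorter. Yours has three advantages:
\begin{itemize}
\item It makes explicit the hypothesis the cited theorem actually needs: the isotropic correlation must be positive definite in every dimension, i.e.\ be a Gaussian scale mixture. The paper states this only as ``valid on $\mathbb{R}^3$''; Matérn satisfies the stronger condition anyway.
\item It shows that only $\ell>0$ matters, not monotonicity, and that the scale coordinate need not be treated non-stationarily.
\item It handles the Mercer part honestly. The paper's proof stops at positive semi-definiteness. The classical Mercer expansion needs a continuous kernel on a compact domain, and a merely monotone $\ell$ may jump. Your added continuity assumption is therefore exactly what is needed, and it is not implied by the paper's hypotheses.
\end{itemize}

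One small slip: in the Matérn mixture it is $\omega$ that is inverse-gamma distributed (equivalently, $1/\omega$ is gamma), not $1/\omega$. This is harmless, since the argument uses only that $\mu_\nu$ is nonnegative.
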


\begin{proof}[Proof Sketch]

The proof is grounded in the generalized non-stationary covariance framework established by Paciorek and Schervish~\cite{paciorek2006spatial}. We represent the search space as a joint 3D manifold $\mathcal{Z} = \mathbb{R}^2 \times \mathbb{R}$ that embeds both spatial and scale dimensions. By specifying a spatially-varying, block-diagonal covariance matrix $\Sigma(\mathbf{z})$ over $\mathcal{Z}$, where the spatial components are conditioned on the scale coordinate, we ensure the resulting kernel satisfies the necessary conditions for positive semi-definiteness. Detailed derivations are provided in \cref{subsec:t3_2}.
\end{proof}


\begin{remark}[Implementation]
In practice, we use the arithmetic mean $L_{ij} = (\ell(s_i) + \ell(s_j))/2$ as a first-order approximation of the Gibbs length-scale for computational efficiency. Empirical results show that this maintains the stability of the Cholesky decomposition throughout the optimization process.
\end{remark}

\subsection{Bayesian Optimization for Visual Search}
\label{subsec:bo_search}

This section details how we integrate the attention prior from \cref{subsec:prior} and the scale-aware kernel from \cref{subsec:kernel} into the sequential decision process.

\textbf{Prior-Informed Mean Function.}
Unlike standard zero-mean GPs, we leverage the global context from the MLLM. Utilizing the attention heatmap $\mathbf{A}$ obtained via Early-stop Attention Rollout (\cref{subsec:prior}), we define the continuous \textit{prior mean function} $\mu_0(\mathbf{z})$ for any $\mathbf{z} = (s, \mathbf{x}) \in \mathcal{X}$ as:
\begin{equation}
    \mu_0(s, \mathbf{x}) = \theta \cdot \operatorname{Interp}(\mathbf{A}, \mathbf{x}),
\end{equation}
where $\operatorname{Interp}(\cdot)$ maps the discrete attention grid to the continuous spatial domain $\Omega$. $\mu_0$ is initially assumed invariant across the scale $s$, with $\theta$ modulating the prior's influence.

\textbf{MLLM Evaluation.}
In each iteration $t$, given a candidate $\mathbf{z}_t = (s_t, \mathbf{x}_t)$, we extract the corresponding image crop and query the MLLM. The MLLM acts as a scoring oracle, returning a scalar relevance score $y_t \in [0, 1]$ based on the semantic alignment between the crop and the query. We model this as a noisy observation $y_t = f(\mathbf{z}_t) + \epsilon$, where $\epsilon \sim \mathcal{N}(0, \sigma_n^2)$. 

\textbf{Acquisition via GP-UCB.}
To balance the exploration of unobserved scales and the exploitation of high-relevance regions, we employ the Gaussian Process Upper Confidence Bound (GP-UCB) acquisition function \cite{srinivas2009gaussian}:
\begin{equation}
    \alpha_t(\mathbf{z}) = \mu_{t-1}(\mathbf{z}) + \sqrt{\beta_t} \sigma_{t-1}(\mathbf{z}).
\end{equation}
The posterior mean $\mu_t$ and variance $\sigma_t^2$ are updated using the standard GP regression identities:
\begin{equation}
\begin{split}
    \mu_t(\mathbf{z}) &= \mu_0(\mathbf{z}) + \mathbf{k}_t(\mathbf{z})^T (\mathbf{K}_t + \sigma_n^2 \mathbf{I})^{-1} (\mathbf{y}_t - \boldsymbol{\mu}_{0,t}), \\
    \sigma_t^2(\mathbf{z}) &= k(\mathbf{z}, \mathbf{z}) - \mathbf{k}_t(\mathbf{z})^T (\mathbf{K}_t + \sigma_n^2 \mathbf{I})^{-1} \mathbf{k}_t(\mathbf{z}),
\end{split}
\end{equation}
where $\mathbf{k}_t(\mathbf{z}) = [k(\mathbf{z}_1, \mathbf{z}), \dots, k(\mathbf{z}_t, \mathbf{z})]^T$ is the covariance vector, and $\mathbf{K}_t$ is the $t \times t$ scale-aware covariance matrix. The exploration-exploitation trade-off parameter $\beta_t$ is chosen to satisfy regret bound conditions, typically increasing logarithmically with $t$ to facilitate broader exploration as the search budget is consumed.

\subsection{Convergence and Regret Analysis}
\label{subsec:analysis}

Let $\mathbf{z}^* = \arg\max_{\mathbf{z} \in \mathcal{X}} f(\mathbf{z})$ be the optimal observation window. We analyze the cumulative regret $R_T = \sum_{t=1}^T [ f(\mathbf{z}^*) - f(\mathbf{z}_t) ]$ and aim to show that $\lim_{T \to \infty} R_T/T = 0$.

\begin{assumption}[RKHS Norm with Prior]
\label{assum:rkhs_prior}
We assume the residual function $\tilde{f}(\mathbf{z}) = f(\mathbf{z}) - \mu_0(\mathbf{z})$ resides in the Reproducing Kernel Hilbert Space (RKHS) $\mathcal{H}_k$ associated with the scale-aware kernel $k$, with the bounded norm $\|\tilde{f}\|_k \le B$.
\end{assumption}

\begin{theorem}[Scale-Aware Regret Bound]
\label{thm:regret_bound}
Under Assumption~\ref{assum:rkhs_prior}, let the exploration parameter be $\beta_t = 2B^2 + 300\gamma_t \log^3(t/\delta)$. With probability at least $1-\delta$, the cumulative regret of \ourmethod\ satisfies:
\begin{equation}
    R_T \le \sqrt{C \cdot T \beta_T \gamma_T},
\end{equation}
where $C = 8/\log(1 + \sigma_n^{-2})$ and $\gamma_T$ is the Maximum Information Gain (MIG) of the scale-aware kernel $k$ after $T$ observations.
\end{theorem}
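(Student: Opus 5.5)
The plan is to reduce the claim to the agnostic (RKHS) GP-UCB analysis of Srinivas et al.~\cite{srinivas2009gaussian}, Theorem~3, applied to the residual $\tilde f = f - \mu_0$.

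\textbf{Step 1: remove the prior mean.} The posterior formulas in \cref{subsec:bo_search} show that the GP with mean $\mu_0$ fitted to observations $y_t$ is exactly $\mu_0$ plus a zero-mean GP fitted to the residual observations $\tilde y_t = y_t - \mu_0(\mathbf{z}_t) = \tilde f(\mathbf{z}_t) + \epsilon_t$. Concretely, $\mu_t(\mathbf{z}) = \mu_0(\mathbf{z}) + \tilde\mu_t(\mathbf{z})$ and $\sigma_t = \tilde\sigma_t$, since the variance does not depend on the mean. The UCB index and the instantaneous regret $f(\mathbf{z}^*) - f(\mathbf{z}_t)$ are both expressed through $f = \mu_0 + \tilde f$, so the prior mean enters only additively and cancels in the confidence bounds.

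\textbf{Step 2: confidence bound.} By Assumption~\ref{assum:rkhs_prior}, $\tilde f \in \mathcal{H}_k$ with $\|\tilde f\|_k \le B$. The kernel $k$ is a valid PSD kernel by the preceding Mercer theorem, so $\mathcal{H}_k$ is well defined. It is also bounded, since $k(\mathbf{z},\mathbf{z}) = \sigma_f^2$; we normalize $\sigma_f^2 \le 1$ or absorb it into the constants. The Gaussian noise is sub-Gaussian, so it is martingale-difference noise. Theorem~6 of Srinivas et al.\ then gives, with probability at least $1-\delta$ and simultaneously for all $t$ and $\mathbf{z}$,
\[
|f(\mathbf{z}) - \mu_{t-1}(\mathbf{z})| \le \sqrt{\beta_t}\,\sigma_{t-1}(\mathbf{z}),
\]
with the stated $\beta_t = 2B^2 + 300\gamma_t\log^3(t/\delta)$. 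This bound holds over all of $\mathcal{X}$, which avoids the discretization needed in the Bayesian setting.

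\textbf{Step 3: per-step and cumulative regret.} On this event, the UCB choice gives
\[
r_t \le \mu_{t-1}(\mathbf{z}_t) + \sqrt{\beta_t}\,\sigma_{t-1}(\mathbf{z}_t) - f(\mathbf{z}_t) \le 2\sqrt{\beta_t}\,\sigma_{t-1}(\mathbf{z}_t).
\]
Apply Cauchy--Schwarz and use that $\beta_t$ is monotone:
\[
R_T^2 \le T\sum_t r_t^2 \le 4T\beta_T \sum_t \sigma_{t-1}^2(\mathbf{z}_t).
\]
Lemma~5.3/5.4 of Srinivas et al.\ gives $\sum_t \sigma_{t-1}^2(\mathbf{z}_t) \le \frac{2}{\log(1+\sigma_n^{-2})}\, I(\mathbf{y}_T; f) \le \frac{2\gamma_T}{\log(1+\sigma_n^{-2})}$. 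This uses $s^2 \le \frac{\sigma_n^{-2}}{\log(1+\sigma_n^{-2})}\log(1+\sigma_n^{-2}s^2)$ for $s^2 \le 1$, together with the information-gain identity. Combining yields $R_T \le \sqrt{C T\beta_T\gamma_T}$ with $C = 8/\log(1+\sigma_n^{-2})$.

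\textbf{Main obstacle.} The delicate point is Step~1 combined with the noise model in Step~2. The observations $y_t \in [0,1]$ are bounded, so modelling the noise as Gaussian must be read as sub-Gaussian noise; bounded noise is sub-Gaussian, so Theorem~6 still applies, possibly with an adjusted $\sigma_n$. One must also check that the cancellation of $\mu_0$ is exact. This requires the MLLM evaluation to use $\mu_0$ itself and not an approximation. The Implementation remark's arithmetic-mean length-scale would break this, so the theorem should be stated for the exact kernel.

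A second remark is that sub-linearity ($R_T/T \to 0$) requires $\gamma_T$ to grow sublinearly. This needs a separate argument: bound the MIG of the non-stationary kernel by that of a stationary Matérn kernel on the compact 3D domain. The route would be that $\ell(s)$ is bounded above and below on compact $\mathcal{S}$, so the eigenvalue decay is comparable. The stated bound itself does not depend on this.
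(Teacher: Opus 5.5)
Your proposal is correct and follows the same route as the paper's proof. Both pass to the residual $\tilde f = f - \mu_0$ under a zero-mean GP, use the positive semi-definiteness of $k$ from the Mercer theorem, and invoke the agnostic GP-UCB analysis of Srinivas et al.; you additionally unpack that citation (the Theorem~6 confidence band, the per-step bound $r_t \le 2\sqrt{\beta_t}\,\sigma_{t-1}(\mathbf{z}_t)$, and the information-gain lemmas), where the paper only asserts that the confidence intervals are calibrated.

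Two small corrections. First, Theorem~6 of Srinivas et al.\ is stated for noise almost surely bounded by $\sigma_n$, not for Gaussian or sub-Gaussian noise. What actually licenses Step~2 is your observation that $|\epsilon_t| \le 1$ because $y_t, f \in [0,1]$, which matches that hypothesis when $\sigma_n \ge 1$. Second, your auxiliary inequality carries a spurious factor $\sigma_n^{-2}$: it should read $s^2 \le \log(1+\sigma_n^{-2}s^2)/\log(1+\sigma_n^{-2})$ for $s^2 \le 1$, and that version is what gives the correct sum bound and constant $C = 8/\log(1+\sigma_n^{-2})$ you state.
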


\begin{proof}

\textbf{Residual GP Consistency}: By formulating the GP over the residual $\tilde{f}$, the observation $\reward_t - \mu_0(\mathbf{z}_t)$ provides a noisy estimate of $\tilde{f}(\mathbf{z}_t)$. Since $k$ is a valid PSD kernel (Theorem 3.1), the confidence intervals constructed via $\mu_t \pm \sqrt{\beta_t}\sigma_t$ remain well-calibrated for the target function $f$ with high probability \cite{srinivas2009gaussian}.

\textbf{MIG under Scale-Dependent Length-scales}: In our search space, the spatial length-scale $\ell(\state)$ is bounded as $\ell_{\min} \le \ell(\state) \le \ell_{\max}$. Since the domain $\mathcal{X}$ is compact and $k$ is locally equivalent to a stationary Matérn kernel, the eigenvalues of the covariance matrix $\mathbf{K}_T$ decay at a polynomial rate. For $d=3$, this yields a sub-linear MIG growth $\gamma_T = \mathcal{O}(T^{\frac{d(d+1)}{2\nu + d(d+1)}} \log T)$, ensuring $R_T/T \to 0$ as $T \to \infty$.

\textbf{Prior Acceleration}: The constant factor $B = \|f - \mu_0\|_k$ reflects the initial knowledge gap. An informative MLLM prior ensures $\|f - \mu_0\|_k \ll \|f\|_k$, effectively shrinking the search volume in the early iterations of the BO process.
\end{proof}

\subsection{Information Consolidation}
\label{subsec:squeezing}

The final search yields a posterior $\mu_T(s, \mathbf{x})$ characterizing relevance across the spatial-scale manifold. To generate visual content suitable for direct MLLM input, we consolidate disjoint yet critical regions into a compact representation.

\textbf{Multi-scale Aggregation.}
We first aggregate $\mu_T$ across the scale dimension $s$ to generate a unified 2D spatial importance map $M(\mathbf{x}) = \max_{s \in \mathcal{S}} \mu_T(s, \mathbf{x})$. This ensures that relevance detected at any zoom level is preserved.

\textbf{Semantic Masking and Squeezing.}
A binary mask $\mathcal{B}(\mathbf{x})$ is derived by applying an adaptive threshold $\tau$ to $M(\mathbf{x})$:
\begin{equation}
    \mathcal{B}(\mathbf{x}) = \mathbbm{1}[M(\mathbf{x}) > \tau].
\end{equation}
Based on $\mathcal{B}(\mathbf{x})$, the \textit{Squeezing} operation removes redundant background pixels. Unlike standard rectangular cropping, our algorithm identifies the minimal set of rows $\mathcal{R}$ and columns $\mathcal{C}$ that intersect with the support of $\mathcal{B}(\mathbf{x})$. The final squeezed image $I_{sq}$ is constructed as the sub-matrix:
\begin{equation}
    I_{sq} = I[\mathcal{R}, \mathcal{C}].
\end{equation}
This process effectively ``collapses'' the uninformative space between distant relevant objects, resulting in a compact, information-dense composite representation.

\section{Experiments}
\label{sec:experiments}

\begin{table*}[t]
\centering
\caption{Results on MMStar dataset. We compare our \ourmethod\ with the base Qwen3-VL-8B model across various categories.}
\label{tab:mmstar_results}
\resizebox{\textwidth}{!}{
\begin{tabular}{lcccccccccccccc}
\toprule
 & \multicolumn{2}{c}{\textbf{Overall}} & \multicolumn{2}{c}{\textbf{Coarse Perc.}} & \multicolumn{2}{c}{\textbf{Fine-grained}} & \multicolumn{2}{c}{\textbf{Inst. Reason.}} & \multicolumn{2}{c}{\textbf{Logi. Reason.}} & \multicolumn{2}{c}{\textbf{Math}} & \multicolumn{2}{c}{\textbf{Sci \& Tech}} \\
\textbf{Model} & Acc. & $\Delta$ & Acc. & $\Delta$ & Acc. & $\Delta$ & Acc. & $\Delta$ & Acc. & $\Delta$ & Acc. & $\Delta$ & Acc. & $\Delta$ \\
\midrule
Qwen3VL-8B & 66.20 & - & 74.00 & - & 56.00 & - & 75.20 & - & 68.00 & - & 70.40 & - & 53.60 & - \\
w/ \ourmethod & 66.07 & -0.13 & 76.00 & \textbf{+2.00} & 60.00 & \textbf{+4.00} & 74.00 & -1.20 & 66.00 & -2.00 & 70.40 & 0.00 & 50.00 & -3.60 \\
\bottomrule
\end{tabular}
}
\end{table*}
\begin{table}[t]
\centering
\caption{Ablation study on different attention priors using Qwen3-VL-8B. We report the ground-truth region coverage and final accuracy on $V^*$ dataset.}
\label{tab:ablation_priors}
\resizebox{\linewidth}{!}{
    \begin{tabular}{lcc}
    \toprule
    Attention Prior Strategy & Coverage Rate (\%) $\uparrow$ & Accuracy (\%) $\uparrow$ \\
    \midrule
    (1) Full-layer attention rollout          & 12.0 & 82.2 \\
    (2) Object embedding similarity          & 66.4 & 84.8 \\
    (3) Intermediate attention weights       & 76.5 & 88.5 \\
    \midrule
    \textbf{\ourmethod (Ours)}               & \textbf{92.9} & \textbf{93.2} \\
    \bottomrule
    \end{tabular}
}
\end{table}
\begin{table}[t]
\centering
\caption{Efficiency and accuracy analysis of kernel functions on $V^*$. We report Accuracy (\%) at different search steps $N$.}
\label{tab:kernel_analysis_steps}
\resizebox{\linewidth}{!}{
\begin{tabular}{lccc}
\toprule
& \multicolumn{3}{c}{$V^*$ Accuracy (\%) at different steps $\uparrow$} \\
\cmidrule(lr){2-4}
Kernel &   @ 5 Steps & @ 15 Steps & @ 50 Steps \\
\midrule
Standard Matérn   & 90.1 & 92.7 & 93.2 \\
\textbf{Scale-Aware (Ours)}  & \textbf{93.2} & \textbf{93.7} & \textbf{93.7} \\
\bottomrule
\end{tabular}
}
\end{table}


\begin{figure*}[t]
  \centering
   
  \includegraphics[width=\textwidth]{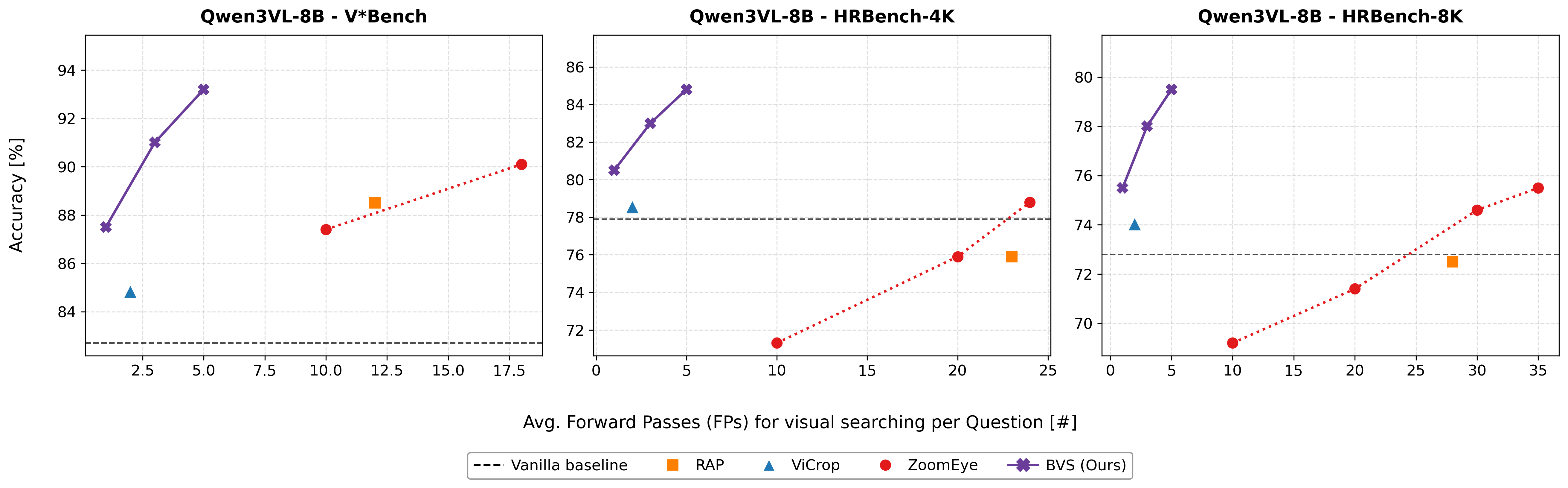}
  
  \caption{
    Efficiency Analysis of \ourmethod comparing with other training-free visual search methods. 
  }
  \label{fig:efficiency}
  
\end{figure*}

\begin{figure*}[t]
    \centering
    \includegraphics[width=0.95\linewidth]{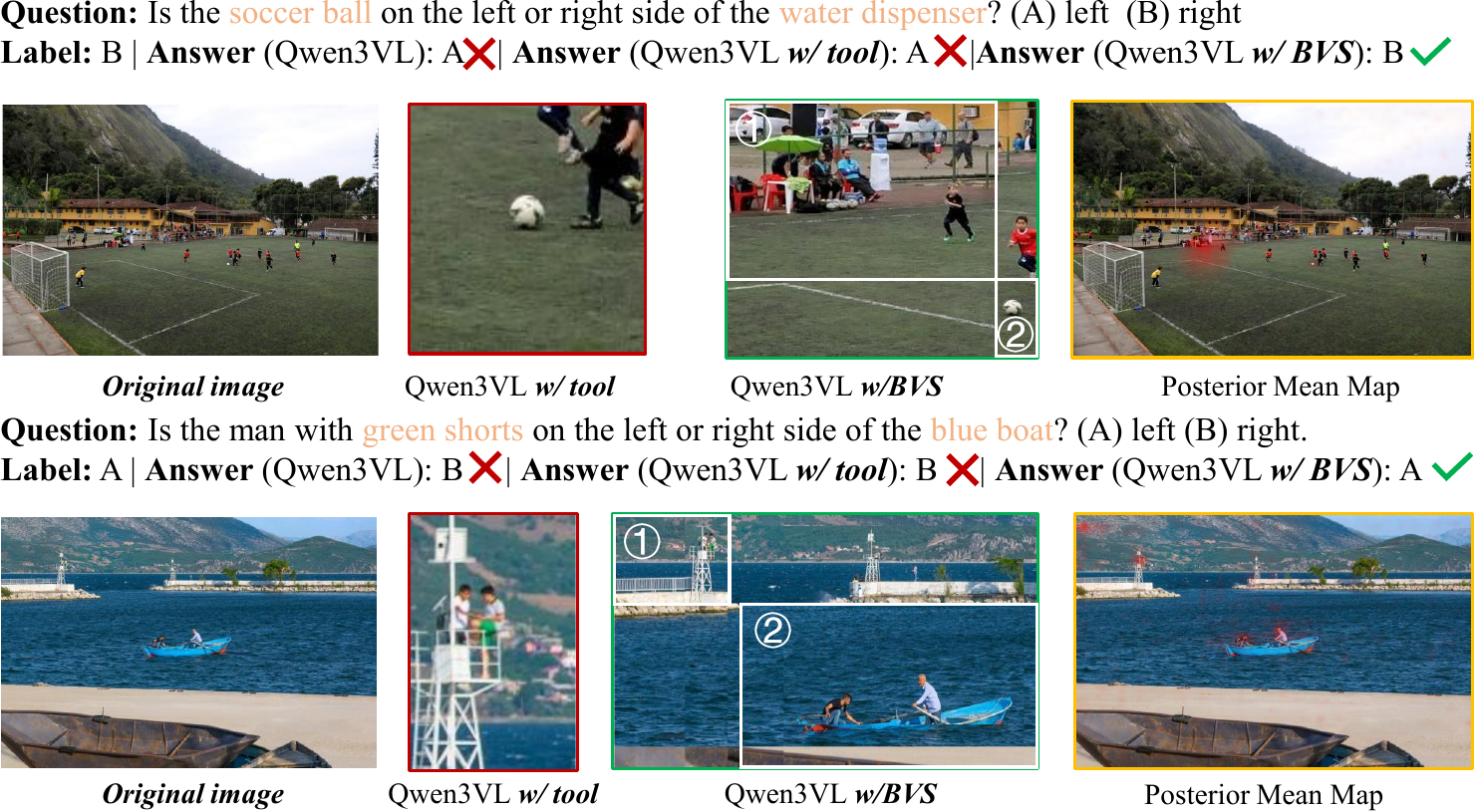} 
    \caption{Cases study from V* Bench of \ourmethod with Qwen3VL-8B. Qwen3-VL \textit{w/tool} output is also reported to compare.  
    }
    \label{fig:case_study}
\end{figure*}

\subsection{Setups}

\noindent\textbf{Benchmarks and Metrics.}
Following prior work~\cite{wu2024v, wang2025retrieval, wang2025divide, zhong2025focus, zhangmllms}, we evaluate \ourmethod on commonly used fine-grained perception benchmarks:
(1) \textbf{$V^*$ Bench}~\cite{wu2024v}, composed of the attribute recognition and spatial reasoning two categories, with an average image resolution of $[2246.27, 1582.98]$;
(2) \textbf{HRBench4K} and \textbf{HRBench8K}~\cite{wang2025divide}, covering single-object and cross-object perception, with an average image resolution of $[4023.93, 3502.94]$ and $[7431.12, 5357.56]$;
To assess performance on standard-resolution tasks, we also include widely taken standard benchmark \textbf{MMStar}~\cite{chen2024we}. We report accuracy as the primary metric across all benchmarks.

\noindent\textbf{Baselines.}
To validate the generalizability of our method and compare it with state-of-the-art results, we employ the latest MLLM series, specifically the 4B and 8B variants of InternVL3.5 and Qwen3-VL as baselines. We compare \ourmethod against open-source visual search methods, including ZoomEye~\cite{shen2024zoomeye}, RAP~\cite{wang2025retrieval}, and ViCrop~\cite{zhangmllms}. 

\noindent\textbf{Implementation Details.}
We utilize Qwen2.5-VL-3B and Qwen2.5-VL-7B as proxy models to acquire attention priors for the 4B and 8B scale base models, respectively. This proxy-based strategy is adopted to construct an offline attention cache mechanism, significantly accelerating the experimental iteration. According to official reports~\cite{Qwen2.5-VL, wang2025internvl3_5, Qwen3-VL}, these proxy models possess strictly weaker perception and reasoning capabilities compared to the newer InternVL3.5 and Qwen3-VL models of comparable size, ensuring that no unfair prior knowledge should be introduced. All benchmark evaluations are conducted using the VLMevalkit framework~\cite{duan2024vlmevalkit} to ensure a unified and fair comparison. By default, the confidence threshold is set to $\tau=0.2$, and the maximum number of search steps is limited to 5 unless otherwise specified.

\subsection{Main Results on Fine-grained Perception}
\label{sec:main_results}


In \cref{tab:main_results} and \cref{tab:grounding}, \ourmethod demonstrates robust performance, surpassing other competitive visual search methods across all evaluated fine-grained perception benchmarks. On the $V^*$ benchmark, \ourmethod achieves an overall score of $93.2$, outperforming the second-best method, ZoomEye, by $+3.1$ points. This trend is mirrored on HRBench4K and HRBench8K; specifically, for HRBench4K, \ourmethod attains an overall score of $84.8$. To the best of our knowledge, this represents the highest result currently reported on this benchmark for an 8B-scale model. These results strongly support the effectiveness of \ourmethod in fine-grained perception tasks.



\subsection{Ablation Studies and Efficiency Analysis}
\label{sec:ablation}
In this subsection, we validate the design choices of \ourmethod on $V^*$ Bench, using Qwen3-VL-8B as the base model.

\noindent\textbf{Analysis of Attention Priors.}
We compare our proposed attention rollout strategy against three alternative schemes:
(1) \textit{Full-layer attention rollout}: Accumulating attention from the first to the last layer.
(2) \textit{Object embedding similarity}: Calculating priors via text-image  token embedding cosine similarity.
(3) \textit{Intermediate attention weights}: Using only the last token's attention from a fixed intermediate layer.

As shown in \cref{tab:ablation_priors}, our method achieves the highest coverage rate ($92.9\%$) and accuracy ($93.2\%$). Notably, full-layer rollout performs poorly ($12.0\%$ coverage) because attention distributions in lower layers often drift toward non-image context tokens, diluting the local semantic information. While embedding similarity provides a semantic baseline, we observe that it introduces excessive irrelevant noise, which may stem from the fact that embeddings in MLLMs are not natively optimized for direct similarity computation. In contrast, intermediate attention weights are observed losing attention for partial objects. This may indicate that attention signals for specific objects are distributed across different layers; consequently, relying solely on single-layer weights is insufficient to fully encapsulate the MLLM's comprehensive focus. Our strategy strikes the best balance by capturing high-resolution image cues.



\noindent\textbf{Kernel Function Analysis.}
To verify that our kernel function effectively accelerates search by dynamically adjusting the length scale, we compare the proposed Scale-Aware Non-stationary Kernel against a standard Matérn kernel. As shown in \cref{tab:kernel_analysis_steps}, \ourmethod reaches an accuracy of $93.2\%$ in only 5 steps, which is already superior to the Matérn kernel's performance at 15 steps ($92.7\%$). While both kernels eventually converge to a similar accuracy at 50 steps, our scale-aware design significantly reduces the search budget required to reach the performance plateau, proving its efficiency in practical deployments.


\noindent\textbf{Efficiency Analysis.}
Following~\cite{zhong2025focus}, we plot the inference accuracy against the number of forward passes (FPs) in \cref{fig:efficiency}. \ourmethod (BVS) achieves a significantly better Pareto frontier compared to existing baselines. Specifically, on HRBench-8K, \ourmethod achieves $>78\%$ accuracy with fewer than 5 FPs, whereas methods like ZoomEye and RAP require 20 to 35 FPs to achieve lower or comparable results. This confirms that our method is not only more accurate but also drastically more computationally efficient.


\subsection{Additional Results}

\noindent\textbf{Results on General Perception Benchmarks.}
While \ourmethod is primarily designed for fine-grained perception, we evaluate its generalization capability on standard-resolution tasks using the MMStar benchmark (\cref{tab:mmstar_results}). The results show that \ourmethod significantly improves performance in fine-grained perception ($+4.0\%$) and coarse perception ($+2.0\%$). Although a marginal decline is observed in logical reasoning, we attribute this to the fact that such tasks are predominantly reasoning-intensive rather than perception-driven; hence, enhancing visual granularity may not bypass the model's inherent cognitive bottlenecks. Overall, the competitive performance across all dimensions demonstrates that \ourmethod acts as a specialized ``magnifying glass," substantially bolstering perception without compromising the foundational capabilities of the base MLLM.

\subsection{Visualization}

To provide an intuitive understanding of \ourmethod, we visualize two representative examples from the $V^*$ benchmark in \cref{fig:case_study}. The posterior mean maps generated by \ourmethod accurately reflect the spatial distribution of relevance across the image relative to the query. In contrast to Qwen3-VL, which relies on discrete coordinate-based crop tools, our approach is more effective at preserving the integrity of critical visual evidence.

As illustrated in the first case (top row), the question requires determining the spatial relationship between a soccer ball and a water dispenser. Qwen3VL $w/$ tool only captures a high-resolution crop of the soccer ball but fails to include the water dispenser, leading to a ``contextual void" and an incorrect answer. In contrast, \ourmethod, guided by the posterior mean maps, sequentially identifies both key regions (labeled circle 1 and circle 2), providing sufficient visual evidence for the model to reason correctly.

\section{Conclusion}
\label{sec:conclusion}

In this paper, we presented \textbf{\ourmethod}, a Bayesian Optimization-based framework designed to enhance the fine-grained perception capabilities of MLLMs in ultra-high-resolution scenarios. \ourmethod effectively bridges the gap between prior-free and prior-driven strategies by integrating the \textbf{Early-stop Attention Rollout} prior, which captures logical reasoning context with a rigorous posterior correction mechanism. A key contribution is the \textbf{Scale-Aware Non-stationary Kernel}, which adapts the sampling strategy to dynamic observation scales, thereby preventing inefficient exploration caused by scale mismatch. We provided theoretical analysis proving the validity of our kernel and the convergence of the optimization process. Empirical results across various benchmarks confirm that \ourmethod not only locates tiny objects more accurately but also operates with significantly fewer model interactions than existing methods. We believe this work offers a solid theoretical foundation and a practical solution for efficient, high-resolution visual perception in future multimodal systems.




\section*{Impact Statement}

This paper presents work whose goal is to advance the field of Machine
Learning. There are many potential societal consequences of our work, none
which we feel must be specifically highlighted here.

\section*{Acknowledgements}

This work was supported by the grants from the National Natural Science Foundation of China (62525201, 62132001, 62432001) and Beijing Natural Science Foundation (L247006).

\bibliography{example_paper}

@String(CVPR= {IEEE Conf. Comput. Vis. Pattern Recog.})

@String(AAAI = {AAAI})

@String(CVPR  = {CVPR})

@article{wang2025retrieval,
  title={Retrieval-augmented perception: High-resolution image perception meets visual rag},
  author={Wang, Wenbin and Jing, Yongcheng and Ding, Liang and Wang, Yingjie and Shen, Li and Luo, Yong and Du, Bo and Tao, Dacheng},
  journal={arXiv preprint arXiv:2503.01222},
  year={2025}
}

@article{chen2024we,
  title={Are We on the Right Way for Evaluating Large Vision-Language Models?},
  author={Chen, Lin and Li, Jinsong and Dong, Xiaoyi and Zhang, Pan and Zang, Yuhang and Chen, Zehui and Duan, Haodong and Wang, Jiaqi and Qiao, Yu and Lin, Dahua and others},
  journal={arXiv preprint},
  year={2024},
  url={https://arxiv.org/abs/2403.20330}
}

@article{bai2023qwen,
  title={Qwen technical report},
  author={Bai, Jinze and Bai, Shuai and Chu, Yunfei and Cui, Zeyu and Dang, Kai and Deng, Xiaodong and Fan, Yang and Ge, Wenbin and Han, Yu and Huang, Fei and others},
  journal={arXiv preprint},
  year={2023},
  url={https://arxiv.org/abs/2309.16609}
}

@article{shen2024zoomeye,
  title={ZoomEye: Enhancing Multimodal LLMs with Human-Like Zooming Capabilities through Tree-Based Image Exploration},
  author={Shen, Haozhan and Zhao, Kangjia and Zhao, Tiancheng and Xu, Ruochen and Zhang, Zilun and Zhu, Mingwei and Yin, Jianwei},
  journal={arXiv preprint},
  year={2024},
  url={https://arxiv.org/abs/2411.16044}
}

@article{Qwen2VL,
  title={Qwen2-VL: Enhancing Vision-Language Model's Perception of the World at Any Resolution},
  author={Wang, Peng and Bai, Shuai and Tan, Sinan and Wang, Shijie and Fan, Zhihao and Bai, Jinze and Chen, Keqin and Liu, Xuejing and Wang, Jialin and Ge, Wenbin and Fan, Yang and Dang, Kai and Du, Mengfei and Ren, Xuancheng and Men, Rui and Liu, Dayiheng and Zhou, Chang and Zhou, Jingren and Lin, Junyang},
  journal={arXiv preprint},
  year={2024}
}

@article{li2024llava,
  title={Llava-onevision: Easy visual task transfer},
  author={Li, Bo and Zhang, Yuanhan and Guo, Dong and Zhang, Renrui and Li, Feng and Zhang, Hao and Zhang, Kaichen and Zhang, Peiyuan and Li, Yanwei and Liu, Ziwei and others},
  journal={arXiv preprint},
  year={2024},
  url={https://arxiv.org/abs/2408.03326}
}

@misc{li2025dyfotrainingfreedynamicfocus,
      title={DyFo: A Training-Free Dynamic Focus Visual Search for Enhancing LMMs in Fine-Grained Visual Understanding}, 
      author={Geng Li and Jinglin Xu and Yunzhen Zhao and Yuxin Peng},
      year={2025},
      eprint={2504.14920},
      archivePrefix={arXiv},
      primaryClass={cs.CV},
      url={https://arxiv.org/abs/2504.14920}, 
}

@article{bai2025qwen2,
  title={Qwen2. 5-vl technical report},
  author={Bai, Shuai and Chen, Keqin and Liu, Xuejing and Wang, Jialin and Ge, Wenbin and Song, Sibo and Dang, Kai and Wang, Peng and Wang, Shijie and Tang, Jun and others},
  journal={arXiv preprint arXiv:2502.13923},
  year={2025}
}

@article{team2025kimi,
  title={Kimi k1. 5: Scaling reinforcement learning with llms},
  author={Team, Kimi and Du, Angang and Gao, Bofei and Xing, Bowei and Jiang, Changjiu and Chen, Cheng and Li, Cheng and Xiao, Chenjun and Du, Chenzhuang and Liao, Chonghua and others},
  journal={arXiv preprint arXiv:2501.12599},
  year={2025}
}

@inproceedings{liu2024improved,
  title={Improved baselines with visual instruction tuning},
  author={Liu, Haotian and Li, Chunyuan and Li, Yuheng and Lee, Yong Jae},
  booktitle={Proceedings of the IEEE/CVF conference on computer vision and pattern recognition},
  pages={26296--26306},
  year={2024}
}

@article{su2025pixel,
  title={Pixel reasoner: Incentivizing pixel-space reasoning with curiosity-driven reinforcement learning},
  author={Su, Alex and Wang, Haozhe and Ren, Weiming and Lin, Fangzhen and Chen, Wenhu},
  journal={arXiv preprint arXiv:2505.15966},
  year={2025}
}

@inproceedings{wu2024v,
  title={V?: Guided visual search as a core mechanism in multimodal llms},
  author={Wu, Penghao and Xie, Saining},
  booktitle={Proceedings of the IEEE/CVF Conference on Computer Vision and Pattern Recognition},
  pages={13084--13094},
  year={2024}
}

@inproceedings{chen2024internvl,
  title={Internvl: Scaling up vision foundation models and aligning for generic visual-linguistic tasks},
  author={Chen, Zhe and Wu, Jiannan and Wang, Wenhai and Su, Weijie and Chen, Guo and Xing, Sen and Zhong, Muyan and Zhang, Qinglong and Zhu, Xizhou and Lu, Lewei and others},
  booktitle=CVPR,
  pages={24185--24198},
  year={2024}
}

@article{zhang2025thyme,
  title={Thyme: Think Beyond Images},
  author={Zhang, Yi-Fan and Lu, Xingyu and Yin, Shukang and Fu, Chaoyou and Chen, Wei and Hu, Xiao and Wen, Bin and Jiang, Kaiyu and Liu, Changyi and Zhang, Tianke and others},
  journal={arXiv preprint arXiv:2508.11630},
  year={2025}
}

@article{wang2025simple,
  title={Simple o3: Towards Interleaved Vision-Language Reasoning},
  author={Wang, Ye and Chen, Qianglong and Li, Zejun and Wang, Siyuan and Guo, Shijie and Zhang, Zhirui and Wei, Zhongyu},
  journal={arXiv preprint arXiv:2508.12109},
  year={2025}
}

@article{blip2,
  title={Blip-2: Bootstrapping language-image pre-training with frozen image encoders and large language models},
  author={Li, Junnan and Li, Dongxu and Savarese, Silvio and Hoi, Steven},
  journal={arXiv:2301.12597},
  year={2023}
}

@article{llava,
  title={Visual instruction tuning},
  author={Liu, Haotian and Li, Chunyuan and Wu, Qingyang and Lee, Yong Jae},
  journal={arXiv:2304.08485},
  year={2023}
}

@article{Qwen2-VL,
  title={Qwen2-VL: Enhancing Vision-Language Model's Perception of the World at Any Resolution},
  author={Wang, Peng and Bai, Shuai and Tan, Sinan and Wang, Shijie and Fan, Zhihao and Bai, Jinze and Chen, Keqin and Liu, Xuejing and Wang, Jialin and Ge, Wenbin and Fan, Yang and Dang, Kai and Du, Mengfei and Ren, Xuancheng and Men, Rui and Liu, Dayiheng and Zhou, Chang and Zhou, Jingren and Lin, Junyang},
  journal={arXiv:2409.12191},
  year={2024}
}

@misc{internvl2,
  title={InternVL2: Better than the Best—Expanding Performance Boundaries of Open-Source Multimodal Models with the Progressive Scaling Strategy},
  author={Chen, Zhe and Wang, Weiyun and Tian, Hao and Ye, Shenglong and Gao, Zhangwei and Cui, Erfei and Tong, Wenwen and Hu, Kongzhi and Luo, Jiapeng and Ma, Zheng and others},
  year={2024},
  url={https://internvl.github.io/blog/2024-07-02-InternVL-2.0},
}

@inproceedings{duan2024vlmevalkit,
  title={Vlmevalkit: An open-source toolkit for evaluating large multi-modality models},
  author={Duan, Haodong and Yang, Junming and Qiao, Yuxuan and Fang, Xinyu and Chen, Lin and Liu, Yuan and Dong, Xiaoyi and Zang, Yuhang and Zhang, Pan and Wang, Jiaqi and others},
  booktitle={Proceedings of the 32nd ACM international conference on multimedia},
  pages={11198--11201},
  year={2024}
}

@inproceedings{wang2025divide,
  title={Divide, conquer and combine: A training-free framework for high-resolution image perception in multimodal large language models},
  author={Wang, Wenbin and Ding, Liang and Zeng, Minyan and Zhou, Xiabin and Shen, Li and Luo, Yong and Yu, Wei and Tao, Dacheng},
  booktitle={Proceedings of the AAAI Conference on Artificial Intelligence},
  volume={39},
  pages={7907--7915},
  year={2025}
}

@article{zhu2025internvl3,
  title={Internvl3: Exploring advanced training and test-time recipes for open-source multimodal models},
  author={Zhu, Jinguo and Wang, Weiyun and Chen, Zhe and Liu, Zhaoyang and Ye, Shenglong and Gu, Lixin and Tian, Hao and Duan, Yuchen and Su, Weijie and Shao, Jie and others},
  journal={arXiv preprint arXiv:2504.10479},
  year={2025}
}

@inproceedings{math,
  author       = {Dan Hendrycks and
                  Collin Burns and
                  Saurav Kadavath and
                  Akul Arora and
                  Steven Basart and
                  Eric Tang and
                  Dawn Song and
                  Jacob Steinhardt},
  title        = {Measuring Mathematical Problem Solving With the {MATH} Dataset},
  booktitle    = {NeurIPS Datasets and Benchmarks},
  year         = {2021}
}

@article{wang2025traceable,
  title={Traceable Evidence Enhanced Visual Grounded Reasoning: Evaluation and Methodology},
  author={Haochen Wang and Xiangtai Li and Zilong Huang and Anran Wang and Jiacong Wang and Tao Zhang and Jiani Zheng and Sule Bai and Zijian Kang and Jiashi Feng and Zhuochen Wang and Zhaoxiang Zhang},
  journal={arXiv preprint arXiv:2507.07999},
  year={2025}
}

@article{lai2025mini-o3,
  title={Mini-o3: Scaling Up Reasoning Patterns and Interaction Turns for Visual Search},
  author={Lai, Xin and Li, Junyi and Li, Wei and Liu, Tao and Li, Tianjian and Zhao, Hengshuang},
  journal={arXiv:2509.07969},
  year={2025}
}

@article{Qwen3-VL,
      title={Qwen3-VL Technical Report}, 
      author={Shuai Bai and Yuxuan Cai and Ruizhe Chen and Keqin Chen and Xionghui Chen and Zesen Cheng and Lianghao Deng and Wei Ding and Chang Gao and Chunjiang Ge and Wenbin Ge and Zhifang Guo and Qidong Huang and Jie Huang and Fei Huang and Binyuan Hui and Shutong Jiang and Zhaohai Li and Mingsheng Li and Mei Li and Kaixin Li and Zicheng Lin and Junyang Lin and Xuejing Liu and Jiawei Liu and Chenglong Liu and Yang Liu and Dayiheng Liu and Shixuan Liu and Dunjie Lu and Ruilin Luo and Chenxu Lv and Rui Men and Lingchen Meng and Xuancheng Ren and Xingzhang Ren and Sibo Song and Yuchong Sun and Jun Tang and Jianhong Tu and Jianqiang Wan and Peng Wang and Pengfei Wang and Qiuyue Wang and Yuxuan Wang and Tianbao Xie and Yiheng Xu and Haiyang Xu and Jin Xu and Zhibo Yang and Mingkun Yang and Jianxin Yang and An Yang and Bowen Yu and Fei Zhang and Hang Zhang and Xi Zhang and Bo Zheng and Humen Zhong and Jingren Zhou and Fan Zhou and Jing Zhou and Yuanzhi Zhu and Ke Zhu},
	  journal={arXiv preprint arXiv:2511.21631},
      year={2025}
}

@inproceedings{zhangmllms,
  title={MLLMs Know Where to Look: Training-free Perception of Small Visual Details with Multimodal LLMs},
  author={Zhang, Jiarui and Khayatkhoei, Mahyar and Chhikara, Prateek and Ilievski, Filip},
  booktitle={The Thirteenth International Conference on Learning Representations},
  year={2025}
}

@article{wang2025internvl3_5,
  title={Internvl3. 5: Advancing open-source multimodal models in versatility, reasoning, and efficiency},
  author={Wang, Weiyun and Gao, Zhangwei and Gu, Lixin and Pu, Hengjun and Cui, Long and Wei, Xingguang and Liu, Zhaoyang and Jing, Linglin and Ye, Shenglong and Shao, Jie and others},
  journal={arXiv preprint arXiv:2508.18265},
  year={2025}
}

@article{zhong2025focus,
  title={FOCUS: Internal MLLM Representations for Efficient Fine-Grained Visual Question Answering},
  author={Zhong, Liangyu and Rosenthal, Fabio and Sicking, Joachim and H{\"u}ger, Fabian and Bagdonat, Thorsten and Gottschalk, Hanno and Schwinn, Leo},
  journal={arXiv preprint arXiv:2506.21710},
  year={2025}
}

@article{abnar2020quantifying,
  title={Quantifying attention flow in transformers},
  author={Abnar, Samira and Zuidema, Willem},
  journal={arXiv preprint arXiv:2005.00928},
  year={2020}
}

@inproceedings{antol2015vqa,
  title={Vqa: Visual question answering},
  author={Antol, Stanislaw and Agrawal, Aishwarya and Lu, Jiasen and Mitchell, Margaret and Batra, Dhruv and Zitnick, C Lawrence and Parikh, Devi},
  booktitle={Proceedings of the IEEE international conference on computer vision},
  pages={2425--2433},
  year={2015}
}

@inproceedings{lai2024lisa,
  title={Lisa: Reasoning segmentation via large language model},
  author={Lai, Xin and Tian, Zhuotao and Chen, Yukang and Li, Yanwei and Yuan, Yuhui and Liu, Shu and Jia, Jiaya},
  booktitle={Proceedings of the IEEE/CVF Conference on Computer Vision and Pattern Recognition},
  pages={9579--9589},
  year={2024}
}

@article{yu2025multimodal,
  title={How multimodal llms solve image tasks: A lens on visual grounding, task reasoning, and answer decoding},
  author={Yu, Zhuoran and Lee, Yong Jae},
  journal={arXiv preprint arXiv:2508.20279},
  year={2025}
}

@article{paciorek2006spatial,
  title={Spatial modelling using a new class of nonstationary covariance functions},
  author={Paciorek, Christopher J and Schervish, Mark J},
  journal={Environmetrics: The official journal of the International Environmetrics Society},
  volume={17},
  number={5},
  pages={483--506},
  year={2006},
  publisher={Wiley Online Library}
}

@article{srinivas2009gaussian,
  title={Gaussian process optimization in the bandit setting: No regret and experimental design},
  author={Srinivas, Niranjan and Krause, Andreas and Kakade, Sham M and Seeger, Matthias},
  journal={arXiv preprint arXiv:0912.3995},
  year={2009}
}

@article{Qwen2.5-VL,
  title={Qwen2.5-VL Technical Report},
  author={Bai, Shuai and Chen, Keqin and Liu, Xuejing and Wang, Jialin and Ge, Wenbin and Song, Sibo and Dang, Kai and Wang, Peng and Wang, Shijie and Tang, Jun and Zhong, Humen and Zhu, Yuanzhi and Yang, Mingkun and Li, Zhaohai and Wan, Jianqiang and Wang, Pengfei and Ding, Wei and Fu, Zheren and Xu, Yiheng and Ye, Jiabo and Zhang, Xi and Xie, Tianbao and Cheng, Zesen and Zhang, Hang and Yang, Zhibo and Xu, Haiyang and Lin, Junyang},
  journal={arXiv preprint arXiv:2502.13923},
  year={2025}
}

@article{zheng2025deepeyes,
  title={DeepEyes: Incentivizing" Thinking with Images" via Reinforcement Learning},
  author={Zheng, Ziwei and Yang, Michael and Hong, Jack and Zhao, Chenxiao and Xu, Guohai and Yang, Le and Shen, Chao and Yu, Xing},
  journal={arXiv preprint arXiv:2505.14362},
  year={2025}
}

@article{peng2025survey,
  title={A survey on fine-grained multimodal large language models},
  author={Peng, Yuxin and Wang, Zishuo and Zheng, Xiangtian and Li, Geng and Yin, Sibo and He, Hulingxiao},
  year={2025},
  publisher={TechRxiv}
}

@article{xiao2026survey,
  title={A Survey on the Green Development of Large Models: From Resource-Efficient Architectures to Hardware-Software Co-Design},
  author={Xiao, Linhui and Cao, Guiping and Guo, Mingyue and Guan, Xianchao and Yang, Fan and Tao, Ming and Li, Xin and Peng, Yuxin and Wang, Yaowei},
  journal={Chinese Journal of Electronics},
  volume={35},
  number={5},
  pages={1--24},
  year={2026}
}

@article{tian2023transformer,
  title={Transformer-based under-sampled single-pixel imaging},
  author={Tian, Ye and Fu, Ying and Zhang, Jun},
  journal={Chinese Journal of Electronics},
  volume={32},
  number={5},
  pages={1151--1159},
  year={2023},
  publisher={CIE}
}
\bibliographystyle{icml2026}

\newpage
\appendix
\onecolumn
\section*{Appendix} 
\section{Visualization of the Phenomenon of Attention Divergence} 
\label{sec:app_vis}
\begin{figure}[h]
    \centering
    \includegraphics[width=\linewidth]{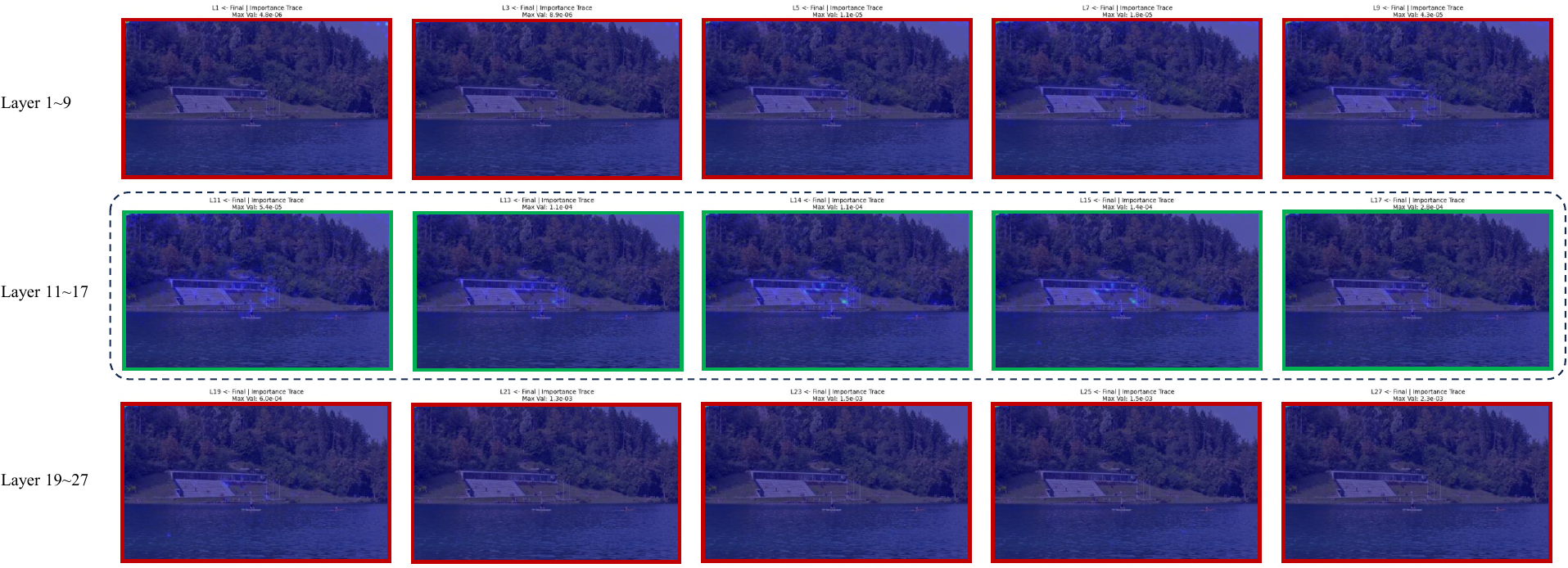} 
    \caption{\textbf{The Phenomenon of Attention Divergence.} An attention rollout case study from V* Bench illustrating the evolution of attention maps across a full 28-layer Qwen2.5VL. Both insufficient depths ($\le 9$) and excessive depths ($\ge 19$) fail to capture critical visual regions accurately. Relevant visual signals only become prominent within the intermediate layers, highlighting the necessity of our early-stopping mechanism. 
    }
    \label{fig:attention}
\end{figure}

In \cref{subsec:prior} of the main text, we introduced the concept of \textit{Attention Divergence} to explain why propagating attention weights through the entire network depth is suboptimal for fine-grained localization. To empirically validate this claim and justify our proposed \textbf{Early-stop Attention Rollout} mechanism, we provide a detailed visualization of the attention maps across different layers in \cref{fig:attention}.

\section{Theoretical Analysis}
\label{sec:theory}
\subsection{Proof of Theorem 3.2 (Positive Definiteness)}
\label{subsec:t3_2}
\textbf{Theorem 3.2.} \textit{The scale-aware kernel $k$ defined in Eq. (2) is positive semi-definite (PSD) for any set of points in $\mathcal{X}$ and satisfies Mercer’s Theorem.}

\begin{proof}
The proof relies on the general framework for non-stationary covariance functions established by Paciorek and Schervish \cite{paciorek2006spatial}. Their theorem states that for any valid stationary correlation function $k_S(\cdot)$ on $\mathbb{R}^d$ and any mapping $\mathbf{z} \to \Sigma(\mathbf{z})$ where $\Sigma(\mathbf{z})$ is a positive definite $d \times d$ Gaussian covariance matrix at each point, the following function is positive semi-definite:
\begin{equation}
    k_{NS}(\mathbf{z}_i, \mathbf{z}_j) = |\Sigma_i|^{\frac{1}{4}} |\Sigma_j|^{\frac{1}{4}} \left| \frac{\Sigma_i + \Sigma_j}{2} \right|^{-\frac{1}{2}} k_S\left( \sqrt{Q_{ij}} \right),
\end{equation}
where $Q_{ij} = (\mathbf{z}_i - \mathbf{z}_j)^\top \left( \frac{\Sigma_i + \Sigma_j}{2} \right)^{-1} (\mathbf{z}_i - \mathbf{z}_j)$.

In our formulation, we define the search space over the augmented vector $\mathbf{z} = [\mathbf{x}^\top, s]^\top \in \mathbb{R}^3$. We construct a block-diagonal covariance matrix $\Sigma(\mathbf{z})$ that treats the spatial and scale dimensions independently but allows the spatial variance to depend on the scale $s$:
\begin{equation}
    \Sigma(\mathbf{z}) = \begin{bmatrix} 
    \ell(s)^2 \mathbf{I}_2 & \mathbf{0} \\ 
    \mathbf{0} & \lambda_s^2 
    \end{bmatrix},
\end{equation}
where $\mathbf{I}_2$ is the $2 \times 2$ identity matrix for the spatial coordinates. Since $\ell(s) > 0$ and $\lambda_s > 0$, $\Sigma(\mathbf{z})$ is positive definite for all $\mathbf{z} \in \mathcal{X}$.

Substituting this structure into the general form:

\textbf{1. Pre-factor derivation:} 
The determinant is $|\Sigma_i| = \ell(s_i)^4 \lambda_s^2$. The determinant of the average matrix is $|\frac{\Sigma_i + \Sigma_j}{2}| = (\frac{\ell_i^2 + \ell_j^2}{2})^2 \lambda_s^2$. The pre-factor simplifies to:
\begin{equation}
    \frac{(\ell_i^4 \lambda_s^2)^{1/4} (\ell_j^4 \lambda_s^2)^{1/4}}{\sqrt{(\frac{\ell_i^2 + \ell_j^2}{2})^2 \lambda_s^2}} = \frac{\ell_i \ell_j \lambda_s}{\frac{\ell_i^2 + \ell_j^2}{2} \lambda_s} = \frac{2\ell_i\ell_j}{\ell_i^2 + \ell_j^2}.
\end{equation}

\textbf{2. Distance metric derivation:} 
The quadratic form $Q_{ij}$ decomposes into spatial and scale components:
\begin{equation}
    Q_{ij} = \frac{(\mathbf{x}_i - \mathbf{x}_j)^\top (\mathbf{x}_i - \mathbf{x}_j)}{\frac{\ell_i^2 + \ell_j^2}{2}} + \frac{(s_i - s_j)^2}{\lambda_s^2}.
\end{equation}
This exactly matches the argument inside the Matérn function in Eq. (2) of the main paper.

Since the Matérn function $\Psi_\nu$ is a valid stationary correlation function on $\mathbb{R}^3$, and our construction satisfies the conditions of Paciorek and Schervish's theorem, the resulting scale-aware Kernel $k$ is positive semi-definite on $\mathcal{X}$.
\end{proof}

\section{Prompt Details }
\subsection{Prompt for MLLM Evaluation as Oracle}
\label{appendix:prompt}
As described in \cref{subsec:bo_search}, we employ the MLLM as a scoring oracle to evaluate the semantic relevance between a candidate image crop and the input query. To ensure the transparency and reproducibility of our search process, we provide the exact system prompt used for this evaluation below. This prompt is designed to steer the MLLM toward a ``Visual Entity Matching" behavior, prioritizing the existence of query-related entities over complex spatial reasoning during the sampling phase. Notably, we consistently apply this standardized prompt across all evaluated benchmarks, including MMStar, to ensure the generalizability of our results.

\begin{promptbox}[MLLM Visual Relevance Scoring Prompt]
You are a Visual Entity Matching Engine. Your task is to calculate the intersection between the nouns in a text query and the objects in an image.

\textbf{Mandatory Thinking Process}
In <THOUGHT> (max 3 sentences):
1. Identify ALL Nouns: List every physical object mentioned (e.g., "backpack", "clock").
2. Scan Image: Search for each noun independently.
3. Apply the "Positive Presence" Rule: If any listed noun is found, the score MUST be above 0.0.

\textbf{Strict Rules \& Scoring Examples}
Rule 1: Equal Weighting of Nouns
Do not distinguish between the "subject" and the "reference object." Every noun mentioned is a target.
Example: Query: "Is the backpack near the clock?"
- If you see ONLY the backpack -> Score 0.6.
- If you see ONLY the clock -> Score 0.6.
- Reasoning: One of the two targets is 100\% present.

Rule 2: Anti-Zero Penalty (Crucial)
Assigning 0.0 when at least one entity is visible is considered a system error. Even if the question is unanswerable, the presence of an entity creates visual relevance.

Rule 3: Scoring Scale
- 1.0: All nouns from the text are clearly visible.
- 0.5 - 0.8: At least one primary noun is found. (e.g., 1 out of 2 nouns = 0.6; 1 out of 3 nouns = 0.4).
- 0.1 - 0.4: No full objects are found, but a related context or a partial fragment of a target is visible.
- 0.0: Absolutely none of the mentioned nouns or their immediate context are in the image.

Rule 4: Ignore "Action" or "Position"
Disregard words like "left," "right," "on," "inside," or "chasing." Only score based on the existence of the nouns.

\textbf{Output Format}
<THOUGHT> (Brief analysis following the steps above)
<SCORE> (Single float between 0.0 and 1.0)

Input Question: [Query Text]
Analyze and provide the output:
\end{promptbox}


\end{document}